\documentclass{amsart}
\usepackage{color,amsmath,graphicx,amsfonts,amssymb}

\begin{document}

\newtheorem{theorem}{Theorem}[section]
\newtheorem{proposition}[theorem]{Proposition}
\newtheorem{alg}[theorem]{Algorithm}
\theoremstyle{definition}
\newtheorem{definition}[theorem]{Definition}
\newtheorem{example}[theorem]{Example}
\theoremstyle{remark}
\newtheorem{remark}[theorem]{Remark}
\newcommand{\scst}{\scriptscriptstyle}
\newcommand{\DP}{\widetilde{\Delta}_{\scst P}}
\newcommand{\la}{\langle}
\newcommand{\ra}{\rangle}
\newcommand{\ot}{\otimes}

\title{Cups Products in $\mathbb{Z}_2$-Cohomology of 3D Poyhedral Complexes}

\thanks{This research was funded in part by the Spanish Ministry of Economy and Competitiveness under project MTM2012-32706 and a Millersville University faculty research grant.}

\author{Rocio Gonzalez-Diaz}
\address{Dept. of Applied Math (I), School of Computer Engineering, University of Seville, Campus Reina Mercedes, C.P. 41012, Seville, Spain}
\email{rogodi@us.es}

\author{Javier Lamar}
\address{Pattern Recognition Department, Advanced Technologies Application Center,
7th Avenue \#21812 218 and 222, Siboney, Playa, C.P. 12200,
Havana City, Cuba}
\email{jlamar@cenatav.co.cu}

\author{Ronald Umble}
\address{Department of Mathematics, Millersville University of Pennsylvania,
P.O. Box 1002
Millersville, PA 17551-0302, Pennsylvania, USA}
\email{ron.umble@millesville.edu}

\keywords{Cohomology, cup product, diagonal approximation, digital image, polyhedral complex, polygon.}

\begin{abstract}
 Let $I=(\mathbb{Z}^3,26,6,B)$ be a 3D digital image, let $Q(I)$ be the associated cubical complex and let $\partial Q(I)$ be the subcomplex of $Q(I)$ whose maximal cells are the quadrangles of $Q(I)$ shared by a voxel of $B$ in the foreground -- the object under study -- and by a voxel of $\mathbb{Z}^3\smallsetminus B$ in the background -- the ambient space.  We show how to simplify the combinatorial structure of $\partial Q(I)$ and obtain a 
3D polyhedral complex $P(I)$ homeomorphic to $\partial Q(I)$ but with fewer cells.  We introduce an algorithm that computes cup products on $H^*(P(I);\mathbb{Z}_2)$ directly from the combinatorics. The computational method introduced here can be effectively applied to any polyhedral complex
embedded in $\mathbb{R}^3$.
\end{abstract}

\date{July 9, 2013}

\maketitle

\section{Introduction}\label{introduction}

This paper completes the work proposed in our prequel-abstract \cite{iwcia2011} by providing additional insights, examples, proofs and a corrected version of formula (\ref{formula}) in Theorem \ref{main}.  \emph{Unless explicitly indicated otherwise, all modules in the discussion that follows are assumed to have $\mathbb{Z}_2$ coefficients.}

Roughly speaking, a
\emph{regular cell complex} is a collection of $q$-dimensional cells
glued together in such a way that non-empty intersections of cells is a cell \cite[page 348]{at}.
A regular cell complex is a \emph{polyhedral complex} if its $q$-cells are $q$-polytopes \cite[pages 24--25]{kozlov}. In particular, a cubical complex is a polyhedral complex whose $q$-cells are $q$-cubes. In this paper we are especially interested in \emph{3D polyhedral complexes}, which are polyhedral complexes embedded in $\mathbb{R}^3$. 

The \emph{cellular chain complex} of a regular cell complex $X$ is the chain complex $\left(C_{\ast}(X),\partial \right)$, where $C_{\ast}(X)$ is the graded $\mathbb{Z}_2$-vector space generated by the cells of $X$ and $\partial$ is the cellular boundary; the \emph{cellular cochain complex} $(C^*(X),\delta)$ is its linear dual. The {\em cellular cohomology} $H^*(X)$ is the quotient $\operatorname{Ker}\delta / \operatorname{Im}\delta$, whose coset representatives are cocycles supported on the connected components, non-contractible loops, and surfaces bounding the cavities of $X$. Moreover, $H^*(X)$ is a graded commutative algebra whose cup product encodes certain relationships among the generators and enhances our ability to distinguish between objects.  For example, $H^*(S^1\vee S^1\vee S^2)$ and $H^*(S^1 \times S^1)$ are isomorphic as vector spaces but not as algebras since cup products of 1-dimensional classes vanish in $H^*(S^1\vee S^1\vee S^2)$ but not in $H^*(S^1 \times S^1)$. Indeed,  $S^1\vee S^1\vee S^2$ and $S^1 \times S^1$ are not homeomorphic and have quite different topological properties.  

In this paper, we introduce new formulas for computing cup products directly from the combinatorics of a 3D polyhedral complex.
When $X$ is a regular cell complex homeomorphic to but with fewer cells than the boundary surface of a 3D digital image, our formulas
improve computational efficiency in certain problems related to 3D image processing. 

To date, the cup product has seen limited application in 
3D image processing.
In \cite{GR03,GR05}, R. Gonzalez-Diaz and  P. Real used their $14$-adjacency algorithm and the standard formulation in \cite{Mun84} to compute cup products on the simplicial complex associated with a given 3D digital image. At about the same time, T. Kaczynski, M. Mischaikow and M. Mrozek \cite{KMM04} showed how to
compute the homology of a 3D digital image directly from the voxels thought of as a cubical complex.  Their algorithm, which applies techniques from linear algebra, demonstrates that the homology 
of a cubical complex 
is
actually computable. 
R. Gonzalez-Diaz, Jimenez and B. Medrano \cite{ijist} introduced a method for computing cup products directly from a cubical complex associated with a given 3D digital image (no additional subdivisions are necessary). 
More recently, motivated by problems in high-dimensional data analysis, T. Kaczynski and M. Mrozek \cite{K} gave an algorithm for computing the cohomology 
algebra of a cubical complex of arbitrary dimension; they are currently in the process of implementing their algorithm.
In the context of persistent cohomology,  A. Yarmola \cite{yarmola} discussed the computation of cup products in 
the cohomology of a finite simplicial complex over a field.
For a geometrical interpretation of cohomology in the context of digital images, we refer the reader to \cite{cviu}.

The cup product arises naturally from a \emph{diagonal approximation} on a regular cell complex $X$, which is a map $\Delta : X \to X \times X$ that preserves cellular structure and is homotopic to the geometric diagonal map 
$\Delta^G(\sigma)=\sigma \times \sigma$.
 In \cite{alumno}, D. Kravatz constructed a diagonal approximation on a general polygon and used it to compute cup products on the cohomology of closed compact orientable surfaces thought of as identification spaces of 
$2n$-gons.
Unfortunately Kravatz's diagonal depends upon a particular indexing of the vertices and cannot be applied in more general settings. Consequently, we introduce a generalization 
of Kravatz's diagonal, which is independent of indexing (Theroem \ref{main}).

A problem that frequently arises in 3D image processing is to encode the boundary surface of a 3D digital image as a set of voxels.
The most popular approach to this problem uses a triangulation.  While triangles are combinatorially simple, and visualization of triangulated surfaces is supported by existing software, the number of triangles required is often large and the computational analysis is correspondingly slow. It is desirable, therefore, to consider approximations with simpler combinatorics that can be analyzed more efficiently.  Indeed, given any covering of a 
$2$-dimensional
surface with polygons, we show how to iteratively merge adjacent polygons and 
reduce
the number of polygons in the covering.  
Having done so, we apply our generalized diagonal approximation to compute cup products directly from the combinatorics.  
Although our strategy for merging adjacent cells  is not new (see \cite{Argawal,Bro,Das,KMS}, for example), the notion that cup products can be computed using a diagonal approximation that varies from cell-to-cell is novel indeed.
For testing purposes, a Matlab implementation of our method is posted at the URL below\footnote{http://grupo.us.es/cimagroup/imagesequence2cupproduct.zip}.

More precisely, let $\mathbb{Z}^3$ denote the set of positive integer lattice points in $\mathbb{R}^3$ and consider
a 3D digital image $I=(\mathbb{Z}^3,26,6,B)$, where 
$B\subset \mathbb{Z}^3$ is the {\itshape foreground}, $B^c=\mathbb{Z}^3 \smallsetminus B$ is the {\itshape background}, 
$26$ is the adjacency relation for the foreground and $6$ for the background {(see \cite{kong}).
Represent $I$
as the set of unit cubes (voxels) centered at the points of $B$ together with all of their faces, and let $Q(I)$ be the associated cubical complex.  Let $\partial Q(I)$ be the subcomplex of $Q(I)$ whose maximal cells are the quadrangles of $Q(I)$ shared by a voxel of $B$ and a voxel of $\mathbb{Z}^3\smallsetminus B$.
We show how to merge adjacent quadrangles into a polygonal face to produce a polyhedral complex homeomorphic to $\partial Q(I)$ with fewer cells.  Our procedure differs from existing simplification procedures whose approximating complexes either fail to be cell complexes (see \cite{KMM04}, for example), are simpler triangulations of given triangulations, or are simplifications of triangulated manifolds.  
A complete list of surface simplifications appear in \cite{survey}.

The paper is organized as follows:  Section \ref{AT} reviews some standard definitions from algebraic topology. Section \ref{section_cup} reviews the notions of diagonal approximation and cup product, defines the diagonal approximation induced from a given diagonal approximation via a chain contraction, and derives some of its important properties.  Our main result, which appears in Section \ref{section_polyhedral} as Theorem \ref{main}, gives an explicit formula for a diagonal approximation on each polygon in a 
3D polyhedral complex $X$.
Since $H^3(X)\equiv 0$, the cup product defined in terms of our combinatorial diagonal is sufficient for computing the cohomology algebra $H^{\ast}(X)$.  In Section \ref{nueva} we introduce our reduction algorithm, which reduces a cubical $Q(I)$ to a homeomorphic polyhedral complex $P(I)$ with fewer cells. 
Using the explicit formula given in Theorem \ref{main}, we give an algorithm for computing cup products on $H^*(P(I))$ directly from the given combinatorics and analyze its computational complexity. Conclusions and some ideas for future work are presented in Section \ref{tres}.

\section{Standard Definitions from Algebraic Topology}\label{AT}

Unless explicitly indicated otherwise, all modules in the discussion that follows are assumed to have $\mathbb{Z}_2$ coefficients.
Let $S=\{S_q\}$ be a graded set. 
A $q$-{\em chain} is a 
finite formal sum 
 of elements of $S_q$ 
(mod $2$ addition).
The $q$-chains together with the operation of vector addition form the vector space $C_q(S)$ of $q$-chains of $S$.
The   \emph{chains of} $S$ is the graded vector space $C_*(S)=\{C_q(S)\}$, and 
a \emph{chain complex of} $S$
is a pair $(C_*(S),\partial)$, where
$\partial=\{\partial_{q}:C_{q}(S)\rightarrow C_{q-1}(S)\}$ is a square zero linear map called the \emph{boundary operator}.

A $q$-chain $a\in C_q(S)$ is a
$q$-\emph{cycle} if $\partial_q  (a)=0$; it is a $q$-\emph{boundary}
if there is a $(q+1)$-chain $a''$ such that $\partial_{q+1} (a'')=a$.
Two $q$-cycles $a$ and $a'$ are \emph{homologous} if $a+a'$ is a $q$-boundary.
Let $Z_q(S)$ and $B_q(S)$ denote the $q$-cycles and $q$-boundaries of $S$, respectively.
Then $B_q(S)\subseteq Z_{q}(S)$ since $\partial\partial=0$. The quotient $H_q(S)=Z_q(S)/B_q(S)$
is the $q^{th}$  \emph{homology 
of} $S$ and the graded 
vector space 
$H_*(S) = \{H_q(S)\}$
is the \emph{homology} of $S$.  An element of $H_q(S)$ is a class $[a] := a + B_q(S)$; the cycle $a$ is a {\em representative cycle}
of the class $[a]$. The dimension of $H_q(S)$ is called the $q^{th}$ Betti number and is denoted by $b_q$.

Let $(C_*(S),\partial)$ and $(C_*(S'),\partial')$ be chain
complexes. A linear map of graded vector spaces 
$f=\{f_{q}:C_{q}(S)\rightarrow C_{q}(S^{\prime
})\}$ (also denoted by $f:C_*(S)\to C_*(S)$) is a \emph{chain map} if
$\mbox{$f_{q-1}\partial_{q}=\partial_{q}^{\prime}f_{q}$ for all $q.$}$
Let $f,g:C_*\left(  S\right)  \rightarrow C_*\left(  S^{\prime}\right)$
be chain maps. A \emph{chain homotopy from }$f$\emph{ to} $g$ is a linear map of graded 
vector spaces
$\phi=\{\phi_{q}:C_{q}(S)\rightarrow C_{q+1}(S^{\prime})\}$ (also denoted by $\phi:C_*(S)\to C_{*+1}(S)$) such that
$\partial_{q+1}^{\prime}\phi_{q}+\phi_{q-1}\partial_{q}=f_{q}+g_{q}$ for all $q$, or simply $\partial^{\prime}\phi+\phi\partial=f+g$.

A \emph{chain contraction of} $(C_*(S),\partial)$ \emph{to} $(C_*(S^{\prime}),\partial^{\prime
})$ is a tuple $(f, g,\phi,(S,\partial),(S',\partial'))$ consisting of
chain maps $f:C_*(S)\rightarrow C_*(S')$, $g:C_*(S^{\prime})\rightarrow C_*(S)$, and 
a chain homotopy
$\phi:C_*(S)\to C_{*+1}(S)$ such that $fg=\mathbf{1}_{\scst C_*\left(  S^{\prime}\right)}$ and $\partial^{\prime}\phi+\phi\partial=\mathbf{1}_{\scst C_*\left(  S\right)}+gf$. 
Note that $f$ and $g$ are chain homotopy equivalences by definition.
The idea of a chain contraction is due to H. Cartan \cite{Cartan}, whose ``little constructions'' are contractions of the bar construction onto a tensor product of exterior algebras, polynomial algebras, and truncated polynomial algebras.  

\begin{remark}\label{property-contraction}
If $(f, g,\phi,(S,\partial),(S',\partial'))$ is a chain contraction, the chain homotopy equivalence $g$ induces and isomorphism $H_{\ast}(S^{\prime})\approx H_{\ast}(S)$.
\end{remark}

Let $n$ be a positive integer. A topological space $X$ is an  {\em $n$-dimensional regular cell complex} if $X$ can be constructed inductively in the following way (see \cite[Page 243]{Massey}):
\begin{enumerate}
\item [$\bullet$] Choose a finite discrete set $X^{(0)}$, called the $0$-{\em skeleton of} $X$; the elements  of  $X^{(0)}$ are called $0$-{\em cells} of $X$.
\item [$\bullet$] If the 
$(q-1)$-skeleton 
$X^{(q-1)}$ 
has already been constructed, construct the $q$-{\em skeleton} 
$X^{(q)}$ by attaching finitely many 
closed $q$-disks 
$\{\sigma_{i}^q\}$ 
to 
$X^{(q-1)}$
via homeomorphisms  
$\{f_{i}: \partial \sigma_{i}^q \to E_{i}^{q-1}\}$,
 where 
$E_{i}^{(q-1)}$
 is a subcomplex of 
$X^{(q-1)}$
homeomorphic to the 
$(q-1)$-dimensional
sphere. Once attached, 
$\sigma_{i}^q$ is called a 
$q$-{\em cell} of $X$.
\item [$\bullet$] The induction terminates at the $n^{th}$ stage with $X=X^{(n)}$.
\end{enumerate}
A $q$-cell $\sigma'$ of $X$ is a {\em facet} of a $(q+1)$-cell $\sigma$
of $X$ if  $\sigma'\subset \sigma$.  A {\em maximal} cell of $X$ is not a facet of any cell of $X$. 
Regular cell complexes have particularly nice properties, for example, their homology 
is
effectively computable \cite[Page 243]{Massey}.
When a regular cell complex $X$ is constructed by attaching 
polytopes, we refer to $X$ as a {\em polyhedral complex}. 
In particular, a
\emph{3D polyhedral complex} is a polyhedral complex embedded in $\mathbb{R}^3$. 

Let $X$ and $Y$ be regular cell complexes.
A continuous map $f:X \to Y$ of regular cell complexes is {\em cellular} if
$f(X^{(q)})\subseteq Y^{(q)}$ for each $q$. 
Thus if $\sigma$ is a $q$-cell of $X$, then $f(\sigma)$ is a union of cells $\{e^{q_i}\} \subset Y$ with $q_i\leq q$ for all $i$. A cellular map $f:X \to Y$ induces a map $f:C_*(X)\to C_*(Y)$ in the obvious way.

Let ${\mathcal X}=\{{\mathcal X}_q\}$ denote the graded set of cells of a regular cell complex $X$.
The \emph{cellular chains of} $X$, denoted by $C_{\ast}\left(  X\right) $, is the 
graded vector space generated by the elements of  ${\mathcal X}$.
The \emph{cellular chain complex of} $X$ is the chain complex $\left(  C_{\ast}\left(  X\right),\partial\right)  ,$ where $\partial$ is the linear extension of the cellular
boundary. The {\em cellular homology of} $X$, denoted by $H_*(X)$, is the homology of the graded set ${\mathcal X}$.

\begin{definition}\label{merging1}
Let  $X$ be a regular cell complex and let $\gamma$ be an $r$-cell of $X$ contained in the boundary of exactly two $(r+1)$-cells $\mu$ and $\mu'$ of $X$. Let $X'$ be the 
cell complex obtained from $X$ by replacing $\mu$ and $\mu'$ with the single cell $\mu''= \mu \cup \mu'$ so that 
$\mathcal{X'}=(\mathcal{X}\smallsetminus\{\gamma,\mu,\mu'\})\cup\{\mu''\} $
 and $\partial'_q: C_q(X')\to C_{q-1}(X')$ 
is the linear extension of the map defined on generators $\sigma\in \mathcal{X'}$ by
\[
\partial_{q}^{\prime}\left(  \sigma\right)  =\left\{
\begin{array}
[c]{ll}%
\partial_{r+1}\left( \mu+\mu'\right)  , &
\mbox{$q=r+1$ and $\sigma=\mu''$,}\\
\partial_{r+2}(\sigma)+\mu+\mu'+\mu'', &
\mbox{$q=r+2$
 and $\mu$ or $\mu'$ lies in $\partial_{r+2}(\sigma)$,}\\
\partial_{q}\left(  \sigma\right)  , & \mbox{otherwise.}
\end{array}
\right.
\]
Then $\mu$ and $\mu'$ have \textbf{merged into} $\mu''$ \textbf{along} $\gamma$.
\end{definition}

\begin{proposition}\label{proposition_merged}
If regular cell complexes $X$ and $X^{\prime}$ are related as in Definition \ref{merging1}, then $H_{\ast}(X)\approx H_{\ast}(X^{\prime})$.
\end{proposition}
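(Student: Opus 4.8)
The plan is to exhibit an explicit chain contraction $(f,g,\phi,(C_*(X),\partial),(C_*(X'),\partial'))$ and then appeal to Remark \ref{property-contraction}, which says that the chain homotopy equivalence $g$ induces an isomorphism $H_*(X')\approx H_*(X)$. Concretely, I would define $g:C_*(X')\to C_*(X)$ to be the identity on every cell of $X'$ except that $g(\mu'')=\mu+\mu'$; define $f:C_*(X)\to C_*(X')$ to be the identity on every cell of $X$ other than $\gamma,\mu,\mu'$, with $f(\mu)=\mu''$, $f(\mu')=0$, and $f(\gamma)=\partial_{r+1}\mu'+\gamma$ (the chain $\partial_{r+1}\mu'$ with its $\gamma$-summand deleted, which is a legitimate chain in $C_r(X')$ since $\gamma$ is a facet of $\mu'$); and define $\phi:C_*(X)\to C_{*+1}(X)$ to be zero on every cell except $\phi(\gamma)=\mu'$.

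The first task is to check that $f$ and $g$ are chain maps. The only generators where anything nontrivial occurs are $\gamma,\mu,\mu',\mu''$ together with those $(r+2)$-cells $\sigma$ having $\mu$ as a facet. The key structural input here is that in a regular cell complex the boundary of an $(r+2)$-cell is a subcomplex homeomorphic to $S^{r+1}$, hence a pseudomanifold in which the $r$-cell $\gamma$ is a face of exactly two $(r+1)$-cells; since globally the only $(r+1)$-cells containing $\gamma$ are $\mu$ and $\mu'$, an $(r+2)$-cell has $\mu$ as a facet if and only if it has $\mu'$ as a facet. This is precisely what makes the clause ``$\mu$ or $\mu'$ lies in $\partial_{r+2}(\sigma)$'' in Definition \ref{merging1} consistent, and it reduces each identity $f\partial=\partial' f$ and $g\partial'=\partial g$ to a short mod-$2$ computation in which the $\gamma$-terms and the $\mu+\mu'$-versus-$\mu''$ terms cancel.

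Next I would verify the contraction identities on generators: $fg=\mathbf 1_{C_*(X')}$ is immediate off $\mu''$, and $fg(\mu'')=f(\mu+\mu')=\mu''$; for the homotopy identity $\partial\phi+\phi\partial=\mathbf 1_{C_*(X)}+gf$, note that $gf$ differs from the identity only on $\gamma,\mu,\mu'$, and, writing $\partial\mu=\gamma+(\text{other facets of }\mu)$ and similarly for $\mu'$, one computes $(\partial\phi+\phi\partial)(\mu)=\mu'$, $(\partial\phi+\phi\partial)(\mu')=\mu'$, $(\partial\phi+\phi\partial)(\gamma)=\partial_{r+1}\mu'$, and $0$ on every other cell -- the last because $\gamma$ lies in the boundary of no cell other than $\mu$ and $\mu'$. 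Matching these against $\mathbf 1+gf$ term by term completes the construction, and Remark \ref{property-contraction} then yields $H_*(X)\approx H_*(X')$.

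I expect the main obstacle to be organizational rather than conceptual: making the case analysis in the chain-map verification airtight, in particular deducing the ``$\mu$ is a facet of $\sigma$ iff $\mu'$ is'' statement cleanly from the regularity hypothesis, and confirming en route that $(C_*(X'),\partial')$ really is a chain complex (that is, $\partial'\partial'=0$), which the same sphere/pseudomanifold argument supplies since within the boundary sphere of an $(r+3)$-cell each $(r+1)$-cell is a facet of exactly two $(r+2)$-cells. One could alternatively observe that $X$ and $X'$ have the same underlying space and invoke topological invariance of cellular homology once $X'$ is seen to be a regular cell complex, but the explicit chain contraction is preferable here because it is exactly the data later needed to transport diagonal approximations through the merging.
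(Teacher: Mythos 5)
Your proposal is correct and follows essentially the same route as the paper: both exhibit the explicit chain contraction with $g(\mu'')=\mu+\mu'$, $f$ killing one of the two merged cells and sending the other to $\mu''$, and $\phi$ supported on $\gamma$, then invoke Remark \ref{property-contraction}; your choice differs from the paper's only by the immaterial swap of the roles of $\mu$ and $\mu'$ (the paper takes $f(\mu')=\mu''$, $f(\gamma)=\gamma+\partial\mu$, $\phi(\gamma)=\mu$). The additional verifications you carry out -- the chain-map identities and the ``$\mu$ is a facet of $\sigma$ iff $\mu'$ is'' point -- are details the paper leaves implicit, and they check out.
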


\begin{proof}
Consider a pair of  cellular maps $f:X\to X^{\prime}$ and $g:X^{\prime}\to X$ with the properties
\begin{itemize}
\item $f(\gamma)=f(\mu)=\partial \mu\smallsetminus \operatorname*{int}(\gamma)$, $f(\mu')=\mu''$, and $f(\sigma) =\sigma$, if $\sigma\in X$ and $\sigma \neq \gamma,\mu,\mu'$;
\item  $g(\mu'')=\mu\cup \mu'$ and  $g(\sigma) = \sigma$, 
if $\sigma\in X'$ and $\sigma \neq \mu''$.
\end{itemize}
Then $f$ and $g$ induce chain maps $f: C_{\ast}(X)\to C_{\ast}(X')$ and $g: C_{\ast}(X')\to C_{\ast}(X)$ such that
\begin{enumerate}
\item[$\bullet$] $f(\gamma)=\gamma+\partial(\mu)$, $f(\mu)=0$, $f(\mu')=\mu''$, and $f(\sigma)= \sigma$, if $\sigma\in\mathcal{X}$ and $\sigma \neq \gamma,\mu,\mu'$;
\item[$\bullet$] $g(\mu'')=\mu+\mu'$
 and $ g(\sigma) = \sigma$, 
if $\sigma\in \mathcal{X'}$ and $\sigma \neq \mu''$.
\end{enumerate}
There is a chain contraction $\left(f,g,\phi, (X,\partial), (X',\partial')\right)$, with $\phi:C_{\ast}(X)\to C_{\ast+1}(X)$ given by
$\phi(\gamma)=\mu$ and zero otherwise,
which is, in fact, a reduction of chain complexes in the sense of \cite{KMS}. 
The conclusion follows by Remark \ref{property-contraction}.
\end{proof}

\begin{figure}[t!]
\centerline{\hspace{4cm}\includegraphics[width=8cm]{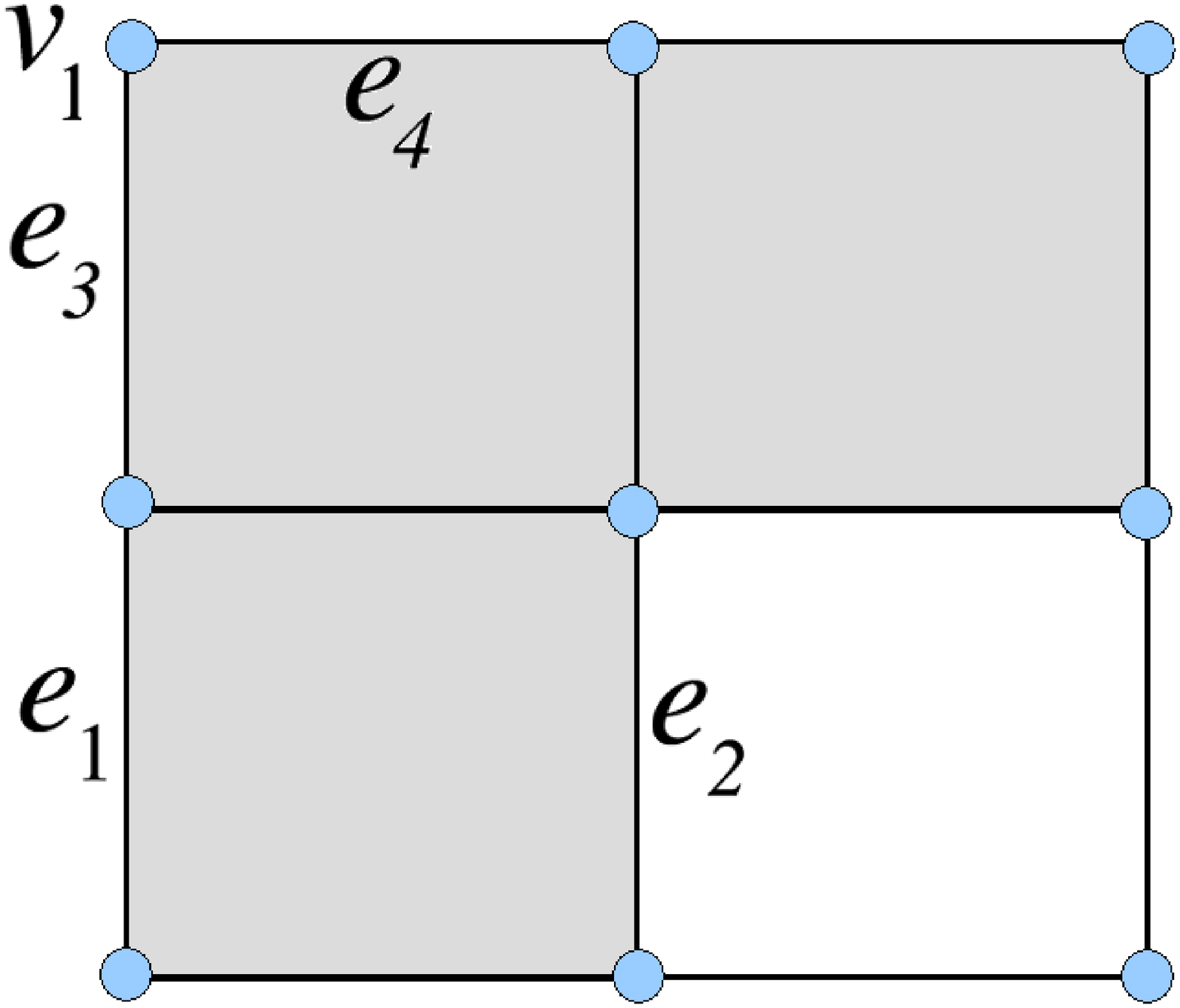}}
\vspace{-4cm}
$$\begin{array}{|l|c|} 
\hline
0-\mbox{cochain } c^0\; & \; A_{c^0}= \{v_1\}\\
1-\mbox{cochain }c^1\; & \;  A_{c^1}=\{e_1,e_4\}\\
1-\mbox{coboundary }\delta (c^0) \; & \; A_{\delta (c^0)} =\{e_3,e_4\}\\
1-\mbox{cocycle }x^1
\; & \; A_{x^1}
= \{e_1,e_2\}\\
1-\mbox{cocycle }y^1
\; & \; A_{y^1}
=\{e_1,e_2,e_3,e_4\}\\
\mbox{Cohomologous cocycles}\; & \; \mbox{$x^1$ and $y^1$; since $y^1+x^1$}
=\delta (c^0)\\\hline
\end{array}$$
 \caption{A cubical complex with $9$ vertices, $12$ edges and $3$ squares (in gray).}
\label{img:excohom}
\end{figure}

The dual space of $C_q(S)$ is denoted by $C^q(S)$ and its elements are called {\em $q$-cochains}.
Thus a $q$-cochain $c$ assigns a scalar value to each generator in $S_q=\{\sigma_1,\dots,\sigma_{n_q}\}$.  Let 
$A =\{\sigma \in S_q\mid c(\sigma)=1\}$; 
then $c$ is the characteristic function $\mathbb{I}_A: S_q \to \mathbb{Z}_2$. 
Let $a\in C_q(S)$ be a $q$-chain. Then $a=\sigma_{i_1}+\dots + \sigma_{i_r}$ for some indices $1 \leq i_1 < \dots < i_r \leq n_q$. 
The {\em cochain dual to} $a$ is the linear map $a^*:C_q(S)\to \mathbb{Z}_2$ such that $a^*(\sigma_i)=1$ if and only if  $i\in\{i_1,\dots,i_r\}$. 
The vector space of \emph{cochains of}
$S$ is the graded 
vector space
$C^*(S)=\{C^q (S)\}$.
The \emph{coboundary operator} is the linear map
$\delta=\{\delta^q: C^q(S)\to C^{q+1}(S)\}$ defined on a $q$-cochain $c$ by  $\delta^q(c)=c\partial_{q+1}
$; then $\delta\delta=0$
and the pair $(C^*(S),\delta)$ is the  \emph{cochain complex dual to} $(C_*(S),\partial)$.
A $q$-cochain $c$ is a $q$-\emph{cocycle} if $\delta^q (c)=0$. A $q$-cochain $b$ is a $q$-\emph{coboundary}
if there exists a $(q-1)$-cochain $c$ such that $b=\delta^{q-1}(c)$.
Two $q$-cocycles $c$ and $c'$ are \emph{cohomologous} if $c+c'$ is a $q$-coboundary. The $q$-cocycles $Z^q(S)$ and $q$-coboundaries $B^q(S)$ satisfy $B^q(S)\subseteq Z^{q}(S)$ since $\delta\delta=0$ (for concrete examples of cocyles and coboundaries, see the table in Figure \ref{img:excohom}).
The quotient $H^q(S)=Z^q(S)/B^q(S)$
is the $q^{th}$  \emph{cohomology
of} $S$ and the graded 
vector space
$H^*(S) = \{H^q(S)\}$
is the \emph{cohomology} of $S$.  An element of $H^q(S)$ is a class 
$[c] := c + B^q(S)$.

\begin{remark}\label{facts0}
Since the ground ring $\mathbb{Z}_2$ is a field,
the homology and cohomology
of a finite graded set $S$
are isomorphic and torsion free (see \cite[pages 325, 332--333]{Mun84}).
Furthermore, given
a chain contraction $\left( f,g,\phi,(S,\partial
),(S^{\prime},\partial^{\prime})\right) $, there is a map $\partial_{-}f:C_{\ast}\left(  S^{\prime}\right)  \rightarrow
C^{\ast}\left(  S\right)  $ that sends 
an element $\sigma\in S'$ to the cochain
$\partial_{\sigma}f$ defined by
\[
\partial_{\sigma}f\left(  \mu\right)  =\left\{
\begin{array}
[c]{cl}%
1, & \text{if }\sigma\text{ is a non-zero summand of }f\left(  \mu\right)  ;\\
0, & \text{otherwise;}%
\end{array}
\right.
\]
The cochain $\partial_{\sigma}f$ is supported on the
subspace generated by all $\mu\in S$
such that
$\sigma$ is a summand of $f\left(  \mu\right).$
\end{remark}

Let $S$ be a graded set, let $F \subseteq S$, and let
$0_F$
denote the 
zero 
differential on $C_*(F)$.
An \emph{Algebraic-Topological model} (AT-model) for $\left( C_*(S),\partial\right)$ is a chain contraction of the form
$\left(f,g,\phi,(S,\partial),(F,0_F)\right)$; since $0_F\equiv 0$, we often 
drop the symbol $0_F$ and
write $\left(f,g,\phi,(S,\partial),F\right)$.
An AT-model for $\left(  C_*(S),\partial\right)$ always exists with computational complexity no worse than ${\mathcal{O}}(m^{3})$, where $m=\#S$ (see \cite{GR03,GR05} for details). We note that the algorithm presented in \cite{KMS} for computing homology by reduction of chain complexes is a precursor of the AT-model. 

\begin{proposition}\label{facts1}
Let $\left(  f,g,\phi,(S,\partial),F\right)$ be an AT-model for $\left(  C_*(S),\partial\right)$.
\begin{enumerate}
\item[\textit{i.}] If $\sigma\in F_{q},$ then $g(\sigma) $ represents a class in
$H_{q}(S)  $.\smallskip
\item[\textit{ii.}] The map $F_{q}\rightarrow H_{q}(S)$ given by $\sigma
\mapsto\left[  g(\sigma)\right]  $ extends to an isomorphism $C_{q}(F)\approx
H_{q}\left(  S\right)  .$\smallskip
\item[\textit{iii.}] The map $F_{q}\rightarrow H^{q}\left(  S\right)  $ given
by $\sigma\mapsto\left[  \partial_{\sigma}f\right]  $ extends to an
isomorphism $\partial_{-}f:C_{q}(F)\approx H^{q}\left(  S\right).$
\end{enumerate}
\end{proposition}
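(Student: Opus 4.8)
The plan is to read off all three statements from the two defining relations of the AT-model $(f,g,\phi,(S,\partial),F)$: the chain-map conditions for $f$ and $g$ relative to the zero differential $0_F$ on $C_\ast(F)$ (which say exactly $f\partial = 0$ and $\partial g = 0$), the identity $fg = \mathbf{1}_{C_\ast(F)}$, and the homotopy identity $\partial\phi + \phi\partial = \mathbf{1}_{C_\ast(S)} + gf$. I will use freely that $\mathbb{Z}_2$ is a field, so that $C_q(F) \approx C^q(F)$ via dual bases and so that $C_\ast(F)$, having vanishing differential, has only trivial (co)boundaries.

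For \textit{i.}, since $g$ is a chain map into a complex with zero differential, $\partial\bigl(g(\sigma)\bigr) = g\bigl(0_F(\sigma)\bigr) = 0$ for every $\sigma \in F_q$, so $g(\sigma)$ is a cycle and $[g(\sigma)]$ is defined. For \textit{ii.}, the assignment $\sigma \mapsto [g(\sigma)]$ extends linearly to $g_\ast : C_q(F) \to H_q(S)$, $c \mapsto [g(c)]$, since $C_q(F)$ is free on $F_q$. I would obtain surjectivity by evaluating the homotopy identity on an arbitrary cycle $a$: because $\partial a = 0$, it reads $\partial\phi(a) = a + gf(a)$, whence $[a] = [g(f(a))] = g_\ast(f(a))$ with $f(a) \in C_q(F)$. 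For injectivity, if $[g(c)] = 0$ then $g(c) = \partial b$, and applying $f$ (which annihilates boundaries, as $f\partial = 0$) together with $fg = \mathbf{1}$ yields $c = f(g(c)) = f(\partial b) = 0$.

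For \textit{iii.}, I would first identify $\partial_\sigma f$ with $f^\ast(\sigma^\ast)$, where $f^\ast : C^\ast(F) \to C^\ast(S)$ is the cochain map dual to $f$ and $\sigma^\ast$ is the cochain dual to $\sigma$; thus $\partial_- f$ is the dual-basis isomorphism $C_q(F) \approx C^q(F)$ followed by $f^\ast$ and the quotient onto $H^q(S)$, and since $C^\ast(F)$ has zero differential, $f^\ast$ takes values in cocycles, so $[\partial_\sigma f]$ is defined. I then dualize the AT-model relations: $fg = \mathbf{1}$ gives $g^\ast f^\ast = \mathbf{1}_{C^\ast(F)}$, and $\partial\phi + \phi\partial = \mathbf{1} + gf$ dualizes to $\phi^\ast\delta + \delta\phi^\ast = \mathbf{1}_{C^\ast(S)} + f^\ast g^\ast$, with $\phi^\ast$ lowering cohomological degree by one. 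Surjectivity and injectivity then mirror \textit{ii.} exactly: a cocycle $b$ satisfies $\delta\phi^\ast(b) = b + f^\ast\bigl(g^\ast(b)\bigr)$, so $[b] = [f^\ast(g^\ast(b))]$ lies in the image with $g^\ast(b) \in C^q(F)$; and if $f^\ast(c^\ast) = \delta d$ then $c^\ast = g^\ast f^\ast(c^\ast) = g^\ast(\delta d) = \delta\bigl(g^\ast(d)\bigr) = 0$ because $\delta \equiv 0$ on $C^\ast(F)$.

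I do not anticipate a genuine obstacle: everything is a formal consequence of the contraction identities. The one place that needs attention is the bookkeeping in \textit{iii.} — confirming $\partial_\sigma f = f^\ast(\sigma^\ast)$, verifying that dualization sends the chain contraction $(f,g,\phi)$ to a cochain contraction $(g^\ast, f^\ast, \phi^\ast)$ of $(C^\ast(S),\delta)$ onto $(C^\ast(F),0)$, and tracking that $\phi$ raises chain degree while $\phi^\ast$ lowers cochain degree. With that in hand, \textit{iii.} is precisely the cohomological mirror of \textit{ii.}
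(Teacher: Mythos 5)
Your proof is correct and follows essentially the same route as the paper, which simply declares the result ``immediate'' from Remark \ref{facts0}; your argument is the standard expansion of that remark, reading everything off the contraction identities $f\partial=0$, $\partial g=0$, $fg=\mathbf{1}$, $\partial\phi+\phi\partial=\mathbf{1}+gf$ and their duals over the field $\mathbb{Z}_2$. The identification $\partial_\sigma f=f^\ast(\sigma^\ast)$ and the dualized contraction $(g^\ast,f^\ast,\phi^\ast)$ are exactly the bookkeeping the paper leaves implicit.
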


\begin{proof}
In view of Remark \ref{facts0}, the proof is immediate.
\end{proof}

\begin{proposition}
\label{facts2}Let $(f,g,\phi,(S,\partial),(S^{\prime},\partial^{\prime}))$ be
a chain contraction.
\begin{enumerate}
\item[\textit{i.}] If $(f',g',\phi',(S,\partial),F)$ is an AT-model
for $\left(  C_{\ast}(S),\partial\right)  $, then $(f'g,fg',f\phi'g,$
$(S^{\prime},\partial^{\prime}),F)$ is an AT-model for $\left(  C_{\ast
}(S^{\prime}),\partial^{\prime}\right)  $.
\item[\textit{ii.}] If $(f'',g'',\phi'',(S^{\prime},\partial^{\prime
}),F^{\prime})$ is an AT-model for $\left(  C_{\ast}(S^{\prime}),\partial
^{\prime}\right)  $, then $(f''f,gg'',$ $\phi+g\phi''f,(S,\partial
),F^{\prime})$ is an AT-model for $\left(  C_{\ast}(S),\partial\right)  $.
\end{enumerate}
\end{proposition}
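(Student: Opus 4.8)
The plan is to verify directly that each of the two proposed tuples satisfies the defining conditions of an AT-model: the two chain-map conditions, the identity $\widetilde f\widetilde g=\mathbf 1$, and the homotopy identity $\partial\widetilde\phi+\widetilde\phi\partial=\mathbf 1+\widetilde g\widetilde f$. (The remaining requirement, that the small complex carry the zero differential, holds automatically because $F$ and $F'$ are the $F$-components of the given AT-models.) Every verification is a formal manipulation using only the identities carried by the chain contraction $(f,g,\phi,(S,\partial),(S',\partial'))$ and by the relevant AT-model, together with the fact that the ground ring is $\mathbb{Z}_2$.

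For part \textit{i}, I would set $\widetilde f=f'g$, $\widetilde g=fg'$, and $\widetilde\phi=f\phi'g$. The chain-map conditions $\widetilde f\partial'=0$ and $\partial'\widetilde g=0$ follow by sliding $\partial'$ past $f$ or $g$ (which are chain maps) and then invoking $f'\partial=0$ and $\partial g'=0$ --- these hold precisely because the target of the AT-model $(f',g',\phi',(S,\partial),F)$ carries the zero differential. For $\widetilde f\widetilde g=\mathbf 1$ I would write $\widetilde f\widetilde g=f'(gf)g'$, substitute $gf=\mathbf 1+\partial\phi+\phi\partial$, and note that the two $\phi$-terms are annihilated on the left by $f'\partial=0$ and on the right by $\partial g'=0$, leaving $f'g'=\mathbf 1$. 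For the homotopy identity I would push $\partial'$ through $f$ and $g$ to obtain $\partial'\widetilde\phi+\widetilde\phi\partial'=f(\partial\phi'+\phi'\partial)g=f(\mathbf 1+g'f')g=fg+(fg')(f'g)=\mathbf 1+\widetilde g\widetilde f$.

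Part \textit{ii} is handled symmetrically. Setting $\widehat f=f''f$, $\widehat g=gg''$, and $\widehat\phi=\phi+g\phi''f$, the chain-map conditions reduce to $f''\partial'f=0$ and $g\partial'g''=0$, using that $f,g$ are chain maps and that $f''\partial'=0$, $\partial'g''=0$. The identity $\widehat f\widehat g=f''(fg)g''=f''g''=\mathbf 1$ uses only $fg=\mathbf 1$ and $f''g''=\mathbf 1$. For the homotopy identity I would expand $\partial\widehat\phi+\widehat\phi\partial=(\partial\phi+\phi\partial)+g(\partial'\phi''+\phi''\partial')f=(\mathbf 1+gf)+g(\mathbf 1+g''f'')f=\mathbf 1+gf+gf+\widehat g\widehat f$, and then use that over $\mathbb{Z}_2$ the term $gf+gf$ vanishes, leaving $\mathbf 1+\widehat g\widehat f$.

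Neither part presents a real obstacle; the proof is pure bookkeeping. The only points demanding care are (a) keeping sources and targets straight --- in particular, remembering that an AT-model's small complex has zero differential, which is exactly what forces the cross terms $f'\partial$, $\partial g'$ (respectively $f''\partial'$, $\partial'g''$) to vanish --- and (b) in part \textit{ii}, that the cancellation $gf+gf=0$ depends on working over $\mathbb{Z}_2$; over a general ground ring one would have to insert a sign into $\widehat\phi$, but no such adjustment is needed here.
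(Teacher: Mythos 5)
Your proof is correct and follows essentially the same route as the paper's: both arguments are direct verifications that exploit $f'\partial=0$, $\partial g'=0$, $f''\partial'=0$, $\partial' g''=0$ (since the small complexes carry the zero differential), substitute $gf=\mathbf 1+\partial\phi+\phi\partial$ and $fg=\mathbf 1$ where needed, and perform the $\mathbb{Z}_2$ cancellation $gf+gf=0$ in part (ii). Your write-up is merely a bit more explicit about the chain-map conditions and the characteristic-two cancellation, which the paper leaves implicit.
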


\begin{proof}
Observe that $f'\partial\equiv 0$,
$\partial g' \equiv 0$,
 $f''\partial'\equiv 0$ and $\partial' g''\equiv 0$. Hence 
\begin{itemize}
\item[i.]
$f'gfg'=f'(\partial\phi+\phi\partial+\mathbf{1}_{C_{\ast}\left(  S\right)
})g'=f'g'=\mathbf{1}_{C_{\ast}\left(  F\right)  }$ and
\[
\partial f\phi'g+f\phi'g\partial=f(\partial
\phi'+\phi'\partial)g=f(\mathbf{1}_{C_{\ast}\left(  S\right)  }+g'
f')g
=\mathbf{1}_{C_{\ast}\left(  S^{\prime}\right)
}+fg'f'g.\smallskip
\]
\item[ii.]  
 $f''fgg''=f''\left(\mathbf{1}_{C_{\ast}\left(S^{\prime}\right)  }\right)
g''=f''g''= \mathbf{1}_{C_{\ast}\left( F^{\prime}\right)  }$ and
\begin{align*}
\partial(\phi+g\phi''f)+(\phi+g\phi''f)\partial
& =\partial\phi+\phi\partial+g(\partial\phi''+\phi''\partial)f\\
& =\partial\phi+\phi\partial+g(\mathbf{1}_{C_{\ast}\left(  S^{\prime}\right)  }
+g''f'')f\\
& =\mathbf{1}_{C_{\ast}\left(  S\right)  }+gg''f''f.
\end{align*}
\end{itemize}
\end{proof}

\section{Cup Products and Diagonal Approximations}\label{section_cup}

In this section we review the definitions and related notions we need to compute the cup product on the cellular cohomology $H^{\ast}\left(  X\right)$ of a 
3D  polyhedral complex  $X$.

A {\em graded algebra} is a vector space $A$ equipped with an associative multiplication $m:A\otimes A\to A$ of degree zero and a unit. The cellular cohomology $H^*(X)$ of a regular cell complex $X$
is a {\em graded commutative algebra} with respect to the cup product 
defined below in this section. 
A {\em differential graded algebra} (DGA) is a chain complex $\left(A,d\right)$ in which $A$ is a graded algebra and $d$ a derivation of the product, i.e., $dm=m(d\otimes \mathbf{1} + \mathbf{1} \otimes d)$  (see \cite[page 190]{MacLane}).  Let $d^{\otimes}$ denote the linear extension of $d$ to the $n$-fold tensor product. A {\em homotopy associative algebra} is identical to a DGA to every respect except that the multiplication $m$ on $A$ is {\em homotopy associative}, i.e., there is a chain homotopy $m_3: A^{\otimes 3} \to A$ such that $m(m\otimes \mathbf{1}) + m(\mathbf{1}\otimes m)=d m_3 + m_3 d^{\otimes}$. Let $\tau:A\otimes A\to A\otimes A$ denote the twisting isomorphism
given by $\tau(a\otimes a')=a'\otimes a$. The multiplication $m$ is {\em homotopy commutative} if there is a chain homotopy $h:A\otimes A \to A$ such that $m + m\tau = dh + hd^{\otimes}$. Over a field, 
and $\mathbb{Z}_2$ in particular,
a {\em graded coalgebra} $C$ is the linear dual of a graded algebra, and comes equipped with a coassociative coproduct $\Delta:C \to C\otimes C$ of degree zero and a counit. A {\em differential graded coalgebra} (DGC) is the linear dual of a DGA. Thus a DGC is a chain complex $\left(C,d \right)$ in which $C$ is a graded coalgebra and $d$ is a coderivation of the coproduct, i.e., $\Delta d = (d\otimes \mathbf{1} + \mathbf{1}\otimes d)\Delta$. A homotopy coassociative (cocommutative) coproduct is the linear dual of a homotopy associative (commutative) product. 

The \emph{geometric diagonal} on a topological space $X$ is the map $\Delta^G_{\scst X}:X\rightarrow X\times X$ defined by 
$\Delta^{G}_{\scst X}(\sigma)=\sigma\times \sigma$. 
If $X$ is a regular cell complex, 
 $\Delta^{G}_{\scst X}$
is typically {\em  not} a cellular map, but nevertheless, the Cellular Approximation Theorem \cite[page 349]{at}) guarantees the existence of a cellular map $\Delta_{\scst X}$ homotopic to $\Delta^G_{\scst X}$. Roughly speaking, such maps are ``diagonal approximations'', the precise definition of which now follows (for a categorical treatment see \cite[page 250]{Spanier}).

\begin{definition}\label{diagaproximation}
Let $X$ be a regular cell complex. A \textbf{diagonal approximation on}
$X$
 is a
cellular map $\Delta_{\scst X}:X\rightarrow X\times X$ with the following properties:
\begin{enumerate}
\item[\textit{i.}] $\Delta_{\scst X}$ is homotopic to 
 $\Delta^{G}_{\scst X}$.
\medskip
\item[\text{ii.}] If $\sigma$ is a cell of $X$, then $\Delta_{\scst X}(\sigma)\subseteq
\sigma\times \sigma$.
 \medskip
\item[\textit{iii.}] $\Delta_{\scst X}$ extends to a chain map $\Delta_{\scst X} :C_{\ast}(X)\rightarrow C_{\ast}(X \times X)\approx C_{\ast}(X)\otimes C_{\ast}(X)$, called the \textbf{coproduct induced by} $\Delta_{\scst X}$. 
\end{enumerate}
\end{definition}

Item (iii) in Definition \ref{diagaproximation} implies that the cellular boundary map $\partial$ extends to a
coderivation of $\Delta_{\scst X}$. Thus $\left(  C_{\ast}(X),\partial,\Delta_{\scst X}\right)  $ is a DGC whenever $\Delta_{\scst X}$ is coassociative. Various examples of diagonal approximations appear in \cite{S-U}.

\begin{example}\label{AW}
Let $s_{n}=\langle 0,1,\dots, n\rangle  $ denote the $n$-simplex.
The  
Alexander-Whitney (A-W) diagonal approximation on $s_n$ extends to a chain map $\Delta_{s}: C_*(s_n) \to C_*(s_n)\otimes C_*(s_n)$ 
defined on the top-dimensional generator by
\[
\Delta_{s}(s_n) =\sum_{i=0}^{n}\langle 0,\dots, i\rangle  \otimes\langle  i,\dots, n\rangle
\]
(see \cite[page 250]{Spanier}).
The chain map $\Delta_{s}$ is a strictly coassociative, homotopy cocommutative coproduct on $C_{\ast}(s_{n})$.
Thus $\left(  C_{\ast}\left(  s_{n}\right)  ,\partial,\Delta_{s}\right)  $ is a DGC for each $n$.
\end{example}

The $0$-{\it cube} 
$\mathrm{I}^0$
is the single point $0$, the $1$-{\it cube} 
$\mathrm{I}^1$
is the unit interval $\mathrm{I}= \left[  0,1\right]  $, 
and the $n$-{\it cube} 
$\mathrm{I}^n$
is the $n$-fold Cartesian product 
$\mathrm{I}\times \cdots \times \mathrm{I}$ 
with $n\geq 1$ factors.
A cell of $\mathrm{I}^{n}$ is an $n$-fold Cartesian product
$u_{1} \times \cdots \times u_{n}\in\left\{  0,1,\mathrm{I}\right\}  ^{\times n}$.

\begin{example}\label{serre}
The 
Serre diagonal approximation 
on $ \mathrm{I}^n$ extends to a chain map $\Delta_{\mathrm{I}}:C_*( \mathrm{I}^n)\to C_*( \mathrm{I}^n)\otimes C_*( \mathrm{I}^n)$
defined on the top-dimensional generator by
\[
\Delta_{\mathrm{I}}\left(  0\right)  =0\otimes 0 \text{\; and \;}\Delta_{\mathrm{I}}(\mathrm{I}^{n})=\left( 0\otimes\mathrm{I}+\mathrm{I}\otimes1\right)^{\otimes n} 
\]
(see \cite[page 441]{Ser51}). Let $ac\otimes bd$ denote $(a\otimes b)\otimes (c\otimes d);$ then for example,
\[
\Delta_{\mathrm{I}}\left(  \mathrm{I}^{2}\right)  =  \left(  0\otimes
\mathrm{I}+\mathrm{I}\otimes1\right)  \otimes\left(  0\otimes\mathrm{I}%
+\mathrm{I}\otimes1\right)  =   0\,0\otimes\mathrm{I\,I}+0\,\mathrm{I}\otimes\mathrm{I}\,1+\mathrm{I}\,%
0\otimes1\,\mathrm{I}+\mathrm{I\,I}\otimes1\,1.
\]
Let $0'= \mathrm{I}$ and $\mathrm{I}' = 1$; then for $n \geq 1$ we obtain
\[
\Delta_{\mathrm{I}}\left(  \mathrm{I}^{n}\right)  =\sum_{u_{1}\cdots u_{n}%
\in\left\{  0,\mathrm{I}\right\}  ^{\otimes n}}u_{1}\cdots u_{n}\otimes
u_{1}^{\prime}\cdots u_{n}^{\prime}.
\]
Again, $\Delta_{\mathrm{I}}$ extends to a strictly coassociative, homotopy cocommutative coproduct
$\Delta_{\mathrm{I}}:C_{\ast}(\mathrm{I}^{n})\rightarrow C_{\ast}%
(\mathrm{I}^{n})\otimes C_{\ast}(\mathrm{I}^{n})$ and $\left(  C_{\ast}\left(
\mathrm{I}^{n}\right)  ,\partial,\Delta_{\mathrm{I}}\right)  $ is a DGC for each $n$.
\end{example}

Let $X$ be a regular cell complex, let 
 $\Delta_{\scst X}:X\rightarrow X\times X$
be a diagonal approximation, and let $(f,g,\phi,(X,\partial),F)$ be an
AT-model for $\left(C_*(X),\partial\right)$. Then 
$C_{*}(F) \approx  H_{*}(X)\approx H^{*}(X)$ by
Remark \ref{facts0} and Proposition \ref{facts1}.  Given classes 
$\alpha\in
H^{p}\left(  X\right)  $ and $\alpha'\in H^{q}\left(  X\right)  ,$ there exist unique
chains $a=\left(  \partial_{-}f\right)  ^{-1}\left(  \alpha\right)  \in
C_{p}\left(  F\right)  $ and\ $a'=\left(  \partial_{-}f\right)
^{-1}\left(  \alpha'\right)  \in C_{q}\left(  F\right)  $ such that $\alpha=[\partial
_{a}f]$ and $\alpha'=[\partial_{a'}f].$ The \emph{cup product} of
representative cocycles $\partial_{a}f$ and $\partial_{a'}f$ is the
$\left(  p+q\right)$-cocycle defined on $a''\in C_{p+q}\left(  X\right)
$ by%
\[
\left(  \partial_{a}f\smile\partial_{a'}f\right)  \left(
a''\right)  =m (\partial_{a}f\otimes\partial_{a'}f)\Delta_{\scst
X}(a''),
\]
where $m$ denotes multiplication in $\mathbb{Z}_2$. The \emph{cup product} of classes $\alpha=[\partial_{a}f]\in
H^{p}(X)$ and $\alpha'=[\partial_{a'}f]\in H^{q}(X)$ is the class
\[
\alpha\smile \alpha'=\left[  \partial_{a}f\smile\partial_{a'}f\right]  \in
H^{p+q}(X)
\]
(see \cite[page 251]{Spanier}).
In practice, we compute $\alpha\smile \alpha'$ in the following way:
 $\{g(\sigma):\; \sigma\in F_{p+q}\}$ is a complete set of representative $(p+q)$-cycles. 
For each $\sigma\in F_{p+q}$, let  $\lambda_{\sigma}=(\partial_{a}f\smile\partial_{a'}f)g(\sigma)$ and let $\sigma^*$ denote the cochain dual to $g(\sigma)$. Then 
\begin{equation}\label{productH}
\alpha\smile \alpha' = \sum_{\sigma\in F_{p+q}} \lambda_{\sigma} \left[\sigma^*\right].
\end{equation}

The cup product on $H^*(X)$ is a bilinear operation with unit. If $\Delta_{\scst X}$ is homotopy
coassociative, $\left(  H^{\ast}\left(  X\right)  ,\smile\right)  $ is a
graded algebra; if $\Delta_{\scst X}$ is also homotopy cocommutative, the algebra $\left(
H^{\ast}\left(  X\right)  ,\smile\right)  $ is graded commutative.  In particular, 
if $X$ is a simplicial (cubical) complex, the A-W diagonal $\Delta_s$ (Serre diagonal $\Delta_{\mathrm{I}}$)
induces an associative, graded commutative cup product on $H^{\ast}(X)$.  

In particular, if $X$ is a regular cell complex embedded in $\mathbb{R}^3$ then $H^{n}(X)=0$ for all $n>2$ (see \cite[chapter 10]{alexandroff}). Thus 
if $x,y\in H^{\ast}(X)$ are homogeneous elements of positive degree and $x\smile y \neq 0$, then $x,y\in H^1(X)$. 
Furthermore, since the squaring map $H^{1}(X)\rightarrow H^{2}(X)$ coincides with the Bockstein, a non-zero squaring map
implies that $\mathbb{Z}_{2}$ is a direct summand of both $H^{1}%
(X;\mathbb{Z})$ and $H^{2}(X;\mathbb{Z})$, which is impossible since $H^{2}(X;\mathbb{Z})=H_0(S^3\smallsetminus X,\mathbb{Z})$ by Alexander duality and $H_0(S^3\smallsetminus X,\mathbb{Z})$ is torsion free. Therefore all cup-squares vanish in $H^2(X)$, and we have proved: 

\begin{theorem}[Cohomology Structure Theorem] \label{wofsey}
Let $X$ be a regular cell complex embedded in $\mathbb{R}^3$. Then $H^{\ast}(X)$ is a direct sum of exterior algebras on 1-dimensional generators.
\end{theorem}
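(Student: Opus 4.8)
The plan is to pin down the cup-product structure of $H^{\ast}(X)$ directly, using two classical inputs: Alexander duality for a compact subset of $S^{3}$, and the identification of the cup-square on $\mathbb{Z}_{2}$-cohomology with a Bockstein. Since a regular cell complex has only finitely many cells, $X$ is a compact subset of $\mathbb{R}^{3}\subset S^{3}$, and Alexander duality yields $\widetilde H^{n}(X;\mathbb{Z}_{2})\cong\widetilde H_{2-n}(S^{3}\smallsetminus X;\mathbb{Z}_{2})$, which vanishes for $n>2$ on dimension grounds (alternatively one invokes \cite[chapter 10]{alexandroff} directly). Consequently, if $x,y\in H^{\ast}(X)$ are homogeneous of positive degree and $x\smile y\neq0$, then $\deg x+\deg y\leq2$ and hence $\deg x=\deg y=1$; the entire multiplicative structure in positive degrees is therefore carried by the bilinear pairing $\smile:H^{1}(X)\times H^{1}(X)\to H^{2}(X)$.

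The heart of the matter is to show that every cup-square vanishes, i.e. $x\smile x=0$ for all $x\in H^{1}(X;\mathbb{Z}_{2})$. The squaring map $H^{1}(X;\mathbb{Z}_{2})\to H^{2}(X;\mathbb{Z}_{2})$ is the Steenrod operation $Sq^{1}$, which factors as $\rho\circ\widetilde\beta$, where $\widetilde\beta:H^{1}(X;\mathbb{Z}_{2})\to H^{2}(X;\mathbb{Z})$ is the Bockstein of $0\to\mathbb{Z}\xrightarrow{2}\mathbb{Z}\to\mathbb{Z}_{2}\to0$ and $\rho$ is reduction mod $2$. If some $x$ had $x\smile x\neq0$, then $\widetilde\beta x$ would be a nonzero $2$-torsion class in $H^{2}(X;\mathbb{Z})$. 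But Alexander duality also gives $H^{2}(X;\mathbb{Z})\cong\widetilde H_{0}(S^{3}\smallsetminus X;\mathbb{Z})$, and the reduced $0$-dimensional integral homology of any space is free abelian; thus $H^{2}(X;\mathbb{Z})$ is torsion free, a contradiction. Hence all cup-squares vanish.

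Finally I would assemble the pieces. Over $\mathbb{Z}_{2}$ the cup product is graded commutative, it is concentrated in degrees $\leq2$, every degree-$1$ class squares to zero, and every product of three positive-degree classes vanishes; moreover $H^{\ast}(X)$ splits as a direct sum of algebras over the connected components of $X$, so it suffices to treat the connected case, where $H^{0}(X)=\mathbb{Z}_{2}\cdot1$. Classifying the symmetric bilinear form $\smile$ on $H^{1}(X)$ with vanishing diagonal --- equivalently, an alternating form over $\mathbb{F}_{2}$ --- yields a symplectic basis $a_{1},b_{1},\dots,a_{g},b_{g}$ of a complement to its radical together with a basis of the radical, and then reading off the subalgebras generated by these $1$-dimensional classes exhibits $H^{\ast}(X)$ as the asserted direct sum of exterior algebras on $1$-dimensional generators. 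I expect the second step to be the only genuine obstacle: one must be comfortable identifying the cup-square with $Sq^{1}=\rho\widetilde\beta$ and tracking the $2$-torsion it produces in integral cohomology, after which Alexander duality closes the argument. The degree bound and the concluding algebraic classification are routine by comparison.
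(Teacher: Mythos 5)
Your proposal is correct and follows essentially the same route as the paper: the degree bound $H^{n}(X)=0$ for $n>2$, the identification of the cup-square on $H^{1}(X;\mathbb{Z}_{2})$ with the Bockstein ($Sq^{1}$), and the torsion-freeness of $H^{2}(X;\mathbb{Z})\cong \widetilde H_{0}(S^{3}\smallsetminus X;\mathbb{Z})$ via Alexander duality are exactly the ingredients of the paper's argument (attributed there to Wofsey). Your concluding classification of the resulting alternating form over $\mathbb{F}_{2}$ is a harmless elaboration of the final assembly step that the paper leaves implicit.
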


\remark {The argument that cup-squares vanish in $H^2(X)$ given above
for regular cell complexes $X$ embedded in $\mathbb{R}^3$, is due to E. Wofsey \cite{wofsey}.}\medskip

Let $X$ and $X^{\prime }$ be regular cell complexes, let $(f,g,\phi
,(X,\partial ),(X^{\prime },\partial ^{\prime }))$ be a chain contraction,
and let $(f^{\prime },g^{\prime },\phi ^{\prime },(X^{\prime },\partial
^{\prime }),F^{\prime })$ be an AT-model for $C_{\ast }(X^{\prime })$. Given 
$\sigma^{\prime }\in F_{p}^{\prime }$ and $\mu^{\prime }\in
F_{q}^{\prime }$, consider $\alpha^{\prime }=\left[ \partial _{\sigma^{\prime
}}f^{\prime }\right] $ and $\beta^{\prime }=\left[ \partial _{\mu^{\prime
}}f^{\prime }\right] $ in $H^{\ast }\left( X^{\prime }\right) ,$ and recall
that $(f^{\prime }f,gg^{\prime },\phi +g\phi ^{\prime }f,(X,\partial
),F^{\prime })$ is an AT-model for $C_{\ast }(X)$ (Proposition \ref{facts2}%
, part (ii)). Let $\alpha=\left[ \partial _{\sigma^{\prime }}f^{\prime }f\right] $
and $\beta=\left[ \partial _{\mu }f^{\prime }f\right] $ in $H^{\ast }\left(
X\right) $. Since $f:C_{\ast }(X)\rightarrow C_{\ast }\left( X^{\prime
}\right) $ is a chain homotopy equivalence, the map $f^{\ast }:H^{\ast }\left(
X^{\prime }\right) \rightarrow H^{\ast }\left( X\right) $ given by $f^{\ast }%
\left[ u\right] =\left[ u\circ f\right] $ is an algebra isomorphism. Thus 
$f^{\ast }\left( \alpha^{\prime }\right) =f^{\ast }\left[ \partial _{\sigma^{\prime }}f^{\prime }\right] 
=\left[ \partial _{\sigma^{\prime }}f^{\prime
}f\right] =\alpha,\text{ }f^{\ast }\left( \beta^{\prime }\right) =\beta,$ and 
$\alpha\smile \beta=f^{\ast }\left( \alpha^{\prime }\right) \smile
f^{\ast }\left( \beta^{\prime }\right) =f^{\ast }\left(
\alpha^{\prime }\smile \beta^{\prime }\right) $.
In summary we have proved:

\begin{proposition}
\label{nablaprima}If $X$ and $X^{\prime}$ are regular cell complexes, and $(f,g,\phi,(X,\partial
),$ $(X^{\prime},\partial^{\prime}))$ is a chain contraction, then 
for $\alpha^{\prime},\beta^{\prime} \in
H^{\ast}(X^{\prime})$ and $\alpha=f^*(\alpha'),\ \beta=f^*(\beta') $ we have
\[
f^{\ast }\left(
\alpha^{\prime }\smile \beta^{\prime }\right)= \alpha\smile \beta.
\]
\end{proposition}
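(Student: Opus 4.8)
The substance of the statement is that $f^{*}:H^{*}(X')\to H^{*}(X)$, $[u]\mapsto[u\circ f]$, is an \emph{algebra} isomorphism; granting that, the conclusion $f^{*}(\alpha'\smile\beta')=f^{*}(\alpha')\smile f^{*}(\beta')=\alpha\smile\beta$ is immediate from the hypotheses $\alpha=f^{*}(\alpha')$ and $\beta=f^{*}(\beta')$. The plan is to prove this algebra-isomorphism property by transporting both cup products onto one common complex.

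First I would fix an AT-model $(f',g',\phi',(X',\partial'),F')$ for $C_{*}(X')$ and, by Proposition \ref{facts2}(ii), form the composed AT-model $(f'f,\ gg',\ \phi+g\phi'f,\ (X,\partial),F')$ for $C_{*}(X)$, so that both cohomologies get identified with the same $C_{*}(F')$, via $\partial_{-}f':C_{*}(F')\to H^{*}(X')$ and $\partial_{-}(f'f):C_{*}(F')\to H^{*}(X)$ (Proposition \ref{facts1}(iii)). Checking on generators that $\partial_{\sigma'}f'\circ f=\partial_{\sigma'}(f'f)$ for every $\sigma'\in F'$ --- both take value $1$ on a cell $\mu$ of $X$ exactly when $\sigma'$ is a summand of $(f'f)(\mu)$ --- shows that $f^{*}$ agrees with the composite $H^{*}(X')\cong C_{*}(F')\cong H^{*}(X)$; under these identifications $f^{*}$ is simply the identity of $C_{*}(F')$.

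Next I would compare the two cup products after this transport. Let $\Delta_{X}$ be the diagonal approximation on $X$ and give $X'$ the diagonal $\Delta_{X'}=(f\otimes f)\,\Delta_{X}\,g$ induced through the contraction (in the situation at hand the cup product on $H^{*}(X')$ is computed from this $\Delta_{X'}$). Dualizing the cup-product formula, the product transported to $C_{*}(F')$ from the $X$ side is dual to the coproduct $(f'f\otimes f'f)\,\Delta_{X}\,gg'$ on $C_{*}(F')$, whereas the one transported from the $X'$ side is dual to $(f'\otimes f')\,\Delta_{X'}\,g'=(f'\otimes f')(f\otimes f)\,\Delta_{X}\,gg'=(f'f\otimes f'f)\,\Delta_{X}\,gg'$. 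The two coproducts coincide, so the two cup products transported to $C_{*}(F')$ are literally the same one; since $f^{*}$ is the identity of $C_{*}(F')$, it is an algebra isomorphism, and the proposition follows.

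The step I expect to be the real obstacle is the parenthetical assertion that the cup product on $H^{*}(X')$ is correctly computed from the transported diagonal $\Delta_{X'}=(f\otimes f)\Delta_{X}g$ rather than from some other diagonal approximation on $X'$. Equivalently, one must know that $(f\otimes f)\Delta_{X}g$ is a legitimate diagonal approximation in the sense of Definition \ref{diagaproximation} --- or at least induces the same product on cohomology as one --- which requires the support condition $\Delta_{X'}(\sigma')\subseteq\sigma'\times\sigma'$ and homotopy to the geometric diagonal. This is where the geometry of the contraction, not merely its algebra, has to be used, and it is exactly the property that must be established for the induced diagonal; everything else is the bookkeeping indicated above.
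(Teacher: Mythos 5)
Your argument is correct, but it takes a genuinely different and more explicit route than the paper's. The paper's proof is essentially one line: since $f$ is a chain homotopy equivalence, the induced map $f^{\ast}[u]=[u\circ f]$ is asserted to be an algebra isomorphism, and the identity $f^{\ast}(\alpha'\smile\beta')=f^{\ast}(\alpha')\smile f^{\ast}(\beta')=\alpha\smile\beta$ is then read off; the only other content is the bookkeeping identifying $\alpha=[\partial_{\sigma'}f'f]$ with $f^{\ast}[\partial_{\sigma'}f']$ via the composed AT-model of Proposition \ref{facts2}(ii), which you also carry out. What you add is a proof of the algebra-isomorphism claim itself: transporting both products to $C_{\ast}(F')$ and observing that $(f'\otimes f')\widetilde{\Delta}_{X'}g'=(f'\otimes f')(f\otimes f)\Delta_{X}gg'=(f'f\otimes f'f)\Delta_{X}gg'$ shows the two products literally coincide there, so $f^{\ast}$ is multiplicative by construction. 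This buys precision the paper elides: for an arbitrary chain contraction, not necessarily induced by a continuous map, the standard ``homotopy equivalences induce ring isomorphisms'' fact does not apply verbatim, and the statement only makes sense once the cup product on $H^{\ast}(X')$ is fixed to be the one computed from the induced coproduct $\widetilde{\Delta}_{X'}=(f\otimes f)\Delta_{X}g$ of Definition \ref{coproduct}. You correctly identify that the remaining obstacle is showing $\widetilde{\Delta}_{X'}$ is a legitimate diagonal approximation; the paper supplies exactly this, for the merging contractions it actually uses, in Proposition \ref{merge-diagonal} (support condition and homotopy to the geometric diagonal), with cocommutativity and coassociativity handled in Proposition \ref{nablaprima1}. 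If you want to close your flagged gap purely algebraically, note that $\widetilde{\Delta}_{X'}f=(f\otimes f)\Delta_{X}gf=(f\otimes f)\Delta_{X}+\partial^{\otimes}h+h\partial$ with $h=(f\otimes f)\Delta_{X}\phi$, which exhibits $f$ as a map of coalgebras up to homotopy and hence $f^{\ast}$ as multiplicative, independently of which cohomologous representatives are chosen.
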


\begin{definition}\label{coproduct}
Let $X$ and $X'$ be 
regular cell
complexes, let $\Delta_{\scst X}:C_{\ast}(X)\rightarrow C_{\ast}(X)\otimes
C_{\ast}(X)$ be the coproduct induced by a diagonal approximation, let $(f,g,\phi,$ $(X,\partial),(X^{\prime},\partial'))$ be a chain contraction, and 
let 
$(f',g',\phi',$ $(X',\partial'),F')$
 be an AT-model for $C_*(X')$. The {\em{\bf induced coproduct}} on $C_{\ast}(X')$ is the chain map
$$\widetilde{\Delta}_{\scst X'}:=(f\otimes f)\circ \Delta_{\scst X}\circ g:C_*(X')\to C_*(X')\otimes C_*(X')\label{induced}.$$
\end{definition}

\begin{proposition}\label{merge-diagonal} Let $(f,g,\phi,(X,\partial),(X',\partial'))$ be the explicit chain contraction in the proof of Proposition \ref{proposition_merged}. Then the induced coproduct $\widetilde{\Delta}_{\scriptscriptstyle X^{\prime}}$ defined in Definition \ref{coproduct} restricts to a diagonal approximation.
\end{proposition}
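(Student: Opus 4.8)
The plan is to verify that the chain map $\widetilde{\Delta}_{\scst X'}=(f\otimes f)\circ\Delta_{\scst X}\circ g$ satisfies the three conditions of Definition \ref{diagaproximation}, read at the level of cellular chains. Condition (iii) is automatic: $g$, $\Delta_{\scst X}$ and $f\otimes f$ are chain maps, hence so is their composite. For condition (i), I would use that $f$ and $g$ are induced by cellular maps with $fg=\mathbf{1}_{\scst C_\ast(X')}$, so that, by naturality of the geometric diagonal with respect to the cellular map $g$, one has $\Delta_{\scst X}\circ g\simeq (g\otimes g)\circ\Delta_{\scst X'}$ for any diagonal approximation $\Delta_{\scst X'}$ on $X'$ (both sides being cellular chain approximations of $(g\times g)\circ\Delta^{G}_{\scst X'}$, since $\Delta_{\scst X}\simeq\Delta^{G}_{\scst X}$); then
\[
\widetilde{\Delta}_{\scst X'}=(f\otimes f)\,\Delta_{\scst X}\,g\ \simeq\ (f\otimes f)(g\otimes g)\,\Delta_{\scst X'}=\big((fg)\otimes(fg)\big)\Delta_{\scst X'}=\Delta_{\scst X'},
\]
so $\widetilde{\Delta}_{\scst X'}$ is chain homotopic to the coproduct of a genuine diagonal approximation on $X'$, which is condition (i).

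The substance of the proof is condition (ii): for every cell $\sigma$ of $X'$ one must show $\widetilde{\Delta}_{\scst X'}(\sigma)\in C_\ast(\overline{\sigma})\otimes C_\ast(\overline{\sigma})$. I would argue this in three steps. First, $g(\sigma)$ is supported on the cells of $X$ contained in $\overline{\sigma}$: this is clear when $\sigma\neq\mu''$, where $g(\sigma)=\sigma$, and when $\sigma=\mu''$ it holds because $g(\mu'')=\mu+\mu'$ and $\overline{\mu}\cup\overline{\mu'}=\overline{\mu''}$. Second, letting $Y_\sigma$ be the subcomplex of $X$ whose cells are those contained in $\overline{\sigma}$, condition (ii) for the diagonal approximation $\Delta_{\scst X}$ gives $\Delta_{\scst X}\big(C_\ast(Y_\sigma)\big)\subseteq C_\ast(Y_\sigma)\otimes C_\ast(Y_\sigma)$. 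Third, $f$ carries $C_\ast(Y_\sigma)$ into $C_\ast(\overline{\sigma})$: from the explicit formula for $f$, a cell $\tau$ of $Y_\sigma$ with $\tau\notin\{\gamma,\mu,\mu'\}$ has $f(\tau)=\tau$; one has $f(\mu)=0$; $f(\mu')=\mu''$, which lies in $\overline{\sigma}$ because $\mu'\le\sigma$ forces $\mu\le\sigma$ (see below); and $f(\gamma)=\partial\mu+\gamma$ is a sum of $r$-faces of $\mu$ other than $\gamma$, each of which is a cell of $X'$ contained in $\overline{\mu}\subseteq\overline{\sigma}$. Composing the three steps yields condition (ii). (With augmentation-preservation of $\widetilde{\Delta}_{\scst X'}$ — which follows since $g(v)=v$, $\Delta_{\scst X}(v)=v\otimes v$ and $(f\otimes f)(v\otimes v)=v\otimes v$ for a vertex $v$ of $X'$ — step (ii) also gives a second proof of (i) via acyclic models, the carrier $\sigma\mapsto C_\ast(\overline{\sigma})\otimes C_\ast(\overline{\sigma})$ being acyclic.)

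I expect the one point needing care to be the claim used in the third step: that a cell of $X$ having $\gamma$ as a face also has $\mu$ and $\mu'$ as faces — equivalently, $f(\mu')=\mu''\subseteq\overline{\sigma}$ whenever $\mu'\le\sigma$. This is where the hypotheses of Definition \ref{merging1} are used. Because the face poset of the regular cell complex $X$ is thin (the interval between a cell and one of its codimension-two faces has exactly two interior cells) and $\mu,\mu'$ are the only $(r+1)$-cells of $X$ incident to $\gamma$, every cell of $X$ that has $\gamma$ as a face has both $\mu$ and $\mu'$ as faces; this is exactly the incidence condition under which merging produces a regular cell complex $X'$, so it may legitimately be invoked here. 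Everything else is routine bookkeeping with the explicit formulas for $f$, $g$ and $\phi$ recorded in the proof of Proposition \ref{proposition_merged}.
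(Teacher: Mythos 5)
Your proof is correct, but it follows a different route from the paper's on the key points. For condition (i) the paper constructs an explicit topological homotopy $h'=(f\times f)\circ h\circ (g\times\mathbf{1})$ from $\Delta^{G}_{\scst X'}$ to $\widetilde{\Delta}_{\scst X'}$, where $h$ is a chosen homotopy from $\Delta^{G}_{\scst X}$ to $\Delta_{\scst X}$, and simply evaluates it at the two ends; you instead invoke naturality of the geometric diagonal under the cellular map $g$ together with uniqueness of cellular approximations up to (chain) homotopy, concluding $\widetilde{\Delta}_{\scst X'}\simeq\Delta_{\scst X'}$ for any genuine diagonal approximation $\Delta_{\scst X'}$. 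The paper's version buys an explicit homotopy at the space level (which is literally what Definition \ref{diagaproximation}(i) asks for); yours is shorter and more conceptual, but as written it produces a chain homotopy, so to match the definition verbatim you should note that the identical argument runs topologically, using $\Delta^{G}_{\scst X}\circ g=(g\times g)\circ\Delta^{G}_{\scst X'}$ and $fg\simeq\mathbf{1}_{\scst X'}$. For condition (ii) the paper only verifies $\widetilde{\Delta}_{\scst X'}(\mu'')\subseteq\mu''\times\mu''$, exploiting $f(\mu)=0$ and $g(\mu'')=\mu+\mu'$, and leaves the remaining cells implicit; your carrier argument ($g(\sigma)$ supported in $\overline{\sigma}$, $\Delta_{\scst X}$ preserving the subcomplex $Y_\sigma$, $f$ mapping $C_*(Y_\sigma)$ into $C_*(\overline{\sigma})$) is more complete, and you correctly isolate the one nontrivial incidence fact --- that any cell of $X$ with $\gamma$ as a face has both $\mu$ and $\mu'$ as faces, by thinness of the face poset together with the hypothesis that $\mu,\mu'$ are the only $(r+1)$-cells containing $\gamma$ --- which is exactly what makes $f(\gamma)=\gamma+\partial\mu$ and $f(\mu')=\mu''$ land in $\overline{\sigma}\times\overline{\sigma}$ for the $(r+2)$-cells $\sigma$ incident to $\gamma$. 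So your write-up is, if anything, more careful than the paper's on this point; the two proofs agree on the essential use of the explicit formulas for $f$ and $g$.
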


\begin{proof} 
We first check that $\widetilde{\Delta}_{\scriptscriptstyle X^{\prime}} (\mu'') \subseteq  \mu'' \times \mu''$.
Identify the generators $\sigma \in {\mathcal X}$ and $\sigma^{\prime} \in {\mathcal X'}$
with the corresponding cells $\sigma \in X$ and $\sigma^{\prime} \in X^{\prime}$, and
identify $\left(f\otimes f\right)(\sigma\otimes \sigma)$ with $\left(f \times f\right)(\sigma \times \sigma )$. 
Then 
\begin{align*}
\widetilde{\Delta}_{\scriptscriptstyle X^{\prime}} (\mu'')
& =\left[(f\times f)\circ\Delta_{\scriptscriptstyle X}\circ g\right](\mu'')
=(f\times f)\left[\Delta_{\scriptscriptstyle X} (\mu+\mu')\right]\\
& =(f\times f)(\Delta_{\scriptscriptstyle X}(\mu) +\Delta_{\scriptscriptstyle X}(\mu')).
\end{align*}
By assumption, $\Delta_{\scriptscriptstyle X} (\sigma) \subseteq \sigma\times \sigma$ for each cell $\sigma \subseteq X$; since $f$ vanishes on $\mu$ we have 
\[
(f\times f)(\Delta_{\scriptscriptstyle X}(\mu) +\Delta_{\scriptscriptstyle X}(\mu'))=(f\times f)\Delta_{\scriptscriptstyle X}(\mu')\subseteq \mu''\times \mu''.
\]
Next we show that $\widetilde{\Delta}_{\scriptscriptstyle X^{\prime}}$ is homotopic to the geometric diagonal map $\Delta_{\scriptscriptstyle X^{\prime}}^G$. Choose a homotopy $h:X\times I\rightarrow X\times X$ from $\Delta_{\scriptscriptstyle X}^{G}$
to $\Delta_{\scriptscriptstyle X};$ then at each $\sigma \in X$ we have $h\left(  \sigma,0\right)  =\Delta_{\scriptscriptstyle X}^{G}\left(  \sigma\right)  $
and $h\left(  \sigma,1\right)  =\Delta_{\scriptscriptstyle X}\left(  \sigma\right).$ 
Consider the composition 
\[
h^{\prime}=(f\times f)\circ h\circ (g \times \mathbf{1}) :X^{\prime}\times I \to X^{\prime}\times X^{\prime}.
\] 
Then at each $\sigma^{\prime}\in X^{\prime}$ we have
\begin{align*}
h^{\prime}\left(  \sigma^{\prime},0\right)   &  =\left[  \left(  f\times f\right)
\circ h\circ\left(  g\times\mathbf{1}\right)  \right]  \left(  \sigma^{\prime
},0\right)  =\left(  f\times f\right)  \left[  h\left(  g\left(  \sigma^{\prime
}\right)  ,0\right)  \right]   \\
&=\left(  f\times f\right)  \Delta_{X}^{G}\left(  g\left(  \sigma^{\prime
}\right)  \right)      =\left(  f\times f\right)  \left(  g\left(  \sigma^{\prime
}\right)  \times g\left(  \sigma^{\prime}\right)  \right)  \\
&=fg(\sigma^{\prime})\times
fg(\sigma^{\prime}) =\sigma^{\prime}\times \sigma^{\prime}=\Delta_{X^{\prime}}^{G}\left(  \sigma^{\prime
}\right)
\end{align*}
and%
\begin{align*}
h^{\prime}\left(  \sigma^{\prime},1\right)    & =\left[  \left(  f\times f\right)
\circ h\circ\left(  g\times\mathbf{1}\right)  \right]  \left(  \sigma^{\prime
},1\right)  =\left(  f\times f\right)  \left[  h\left(  g\left(  \sigma^{\prime
}\right)  ,1\right)  \right]  \\
& =\left[  \left(  f\times f\right)  \circ\Delta_{\scriptscriptstyle X}\circ g\right]  \left(
\sigma^{\prime}\right)  =\widetilde{\Delta}_{\scriptscriptstyle X^{\prime}}\left(  \sigma^{\prime}\right).
\end{align*}
It follows that $h^{\prime}$ is a homotopy from $\Delta_{\scriptscriptstyle X^{\prime}}^{G}$ to
$\widetilde{\Delta}_{\scriptscriptstyle X^{\prime}}.$
\end{proof}

\begin{proposition}\label{nablaprima1} Let $X$ be a regular cell complex, let $\Delta_{\scriptscriptstyle X}:C_{\ast}(X) \to C_{\ast}(X) \otimes C_{\ast}(X)$ be the coproduct induced by a diagonal approximation, and let $\widetilde{\Delta}_{\scriptscriptstyle X^{\prime}}$ be the induced coproduct defined in Definition \ref{coproduct}. 
\begin{itemize}
\item If $\Delta_{\scriptscriptstyle X}$ is homotopy cocommutative, so is $\widetilde{\Delta}_{\scriptscriptstyle X^{\prime}}$.
\item If $\Delta_{\scriptscriptstyle X}$ is homotopy coassociative, so is $\widetilde{\Delta}_{\scriptscriptstyle X^{\prime}}$.
\end{itemize}
\end{proposition}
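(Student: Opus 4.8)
The plan is to transport the homotopies that witness cocommutativity and coassociativity of $\Delta_{\scriptscriptstyle X}$ across the chain contraction, in the same way the coproduct itself is transported by $\widetilde{\Delta}_{\scriptscriptstyle X'}=(f\otimes f)\circ\Delta_{\scriptscriptstyle X}\circ g$. Write $d_n$ for the differential on the $n$-fold tensor power of $C_*(X)$ and $d'_n$ for that on $C_*(X')$. Since $f$ and $g$ are chain maps of degree zero, $f^{\otimes n}d_n=d'_nf^{\otimes n}$ and $g\partial'=\partial g$, and since $\Delta_{\scriptscriptstyle X}$ is a chain map, $\Delta_{\scriptscriptstyle X}\partial=d_2\Delta_{\scriptscriptstyle X}$; working over $\mathbb{Z}_2$ there are no Koszul signs to track.

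Consider first the cocommutative case. Let $h:C_*(X)\to C_*(X)^{\otimes 2}$ be a degree-one map with $d_2h+h\partial=\Delta_{\scriptscriptstyle X}+\tau\Delta_{\scriptscriptstyle X}$. I would set $\widetilde{h}:=(f\otimes f)\circ h\circ g$ and compute, pulling $f\otimes f$ and $g$ through the differentials, $d'_2\widetilde{h}+\widetilde{h}\partial'=(f\otimes f)(d_2h+h\partial)g=(f\otimes f)\bigl(\Delta_{\scriptscriptstyle X}+\tau\Delta_{\scriptscriptstyle X}\bigr)g$. The only observation needed is that $\tau$ commutes with $f\otimes f$, since both merely interchange the two tensor slots; hence $(f\otimes f)\tau\Delta_{\scriptscriptstyle X}g=\tau(f\otimes f)\Delta_{\scriptscriptstyle X}g=\tau\widetilde{\Delta}_{\scriptscriptstyle X'}$, and therefore $d'_2\widetilde{h}+\widetilde{h}\partial'=\widetilde{\Delta}_{\scriptscriptstyle X'}+\tau\widetilde{\Delta}_{\scriptscriptstyle X'}$, as required.

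For coassociativity, let $H:C_*(X)\to C_*(X)^{\otimes 3}$ be a degree-one map with $(\Delta_{\scriptscriptstyle X}\otimes\mathbf{1})\Delta_{\scriptscriptstyle X}+(\mathbf{1}\otimes\Delta_{\scriptscriptstyle X})\Delta_{\scriptscriptstyle X}=d_3H+H\partial$. Expanding $\widetilde{\Delta}_{\scriptscriptstyle X'}=(f\otimes f)\Delta_{\scriptscriptstyle X}g$ and using $\mathbf{1}f=f$ gives $(\widetilde{\Delta}_{\scriptscriptstyle X'}\otimes\mathbf{1})\widetilde{\Delta}_{\scriptscriptstyle X'}=(f\otimes f\otimes f)\bigl((\Delta_{\scriptscriptstyle X}gf)\otimes\mathbf{1}\bigr)\Delta_{\scriptscriptstyle X}g$ and, symmetrically, $(\mathbf{1}\otimes\widetilde{\Delta}_{\scriptscriptstyle X'})\widetilde{\Delta}_{\scriptscriptstyle X'}=(f\otimes f\otimes f)\bigl(\mathbf{1}\otimes(\Delta_{\scriptscriptstyle X}gf)\bigr)\Delta_{\scriptscriptstyle X}g$. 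The middle occurrence of $gf$ is the crux: it is not the identity but is chain homotopic to it via $\phi$, since $\partial\phi+\phi\partial=\mathbf{1}+gf$. Setting $\psi:=\Delta_{\scriptscriptstyle X}\phi$, we get $d_2\psi+\psi\partial=\Delta_{\scriptscriptstyle X}(\partial\phi+\phi\partial)=\Delta_{\scriptscriptstyle X}+\Delta_{\scriptscriptstyle X}gf$. A direct check — in which the two copies of $\psi\otimes\partial$ that arise cancel mod $2$ — shows that $L:=(\psi\otimes\mathbf{1})\Delta_{\scriptscriptstyle X}$ satisfies $d_3L+L\partial=(\Delta_{\scriptscriptstyle X}\otimes\mathbf{1})\Delta_{\scriptscriptstyle X}+\bigl((\Delta_{\scriptscriptstyle X}gf)\otimes\mathbf{1}\bigr)\Delta_{\scriptscriptstyle X}$, and likewise $L':=(\mathbf{1}\otimes\psi)\Delta_{\scriptscriptstyle X}$ handles the other slot.

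Adding these two relations and invoking the homotopy coassociativity of $\Delta_{\scriptscriptstyle X}$ yields $\bigl((\Delta_{\scriptscriptstyle X}gf)\otimes\mathbf{1}\bigr)\Delta_{\scriptscriptstyle X}+\bigl(\mathbf{1}\otimes(\Delta_{\scriptscriptstyle X}gf)\bigr)\Delta_{\scriptscriptstyle X}=d_3(H+L+L')+(H+L+L')\partial$. Applying $f^{\otimes 3}$ on the left and $g$ on the right and pulling them through the differentials then exhibits $(\widetilde{\Delta}_{\scriptscriptstyle X'}\otimes\mathbf{1})\widetilde{\Delta}_{\scriptscriptstyle X'}+(\mathbf{1}\otimes\widetilde{\Delta}_{\scriptscriptstyle X'})\widetilde{\Delta}_{\scriptscriptstyle X'}=d'_3\widetilde{H}+\widetilde{H}\partial'$ with $\widetilde{H}:=f^{\otimes 3}\circ(H+L+L')\circ g$, so $\widetilde{\Delta}_{\scriptscriptstyle X'}$ is homotopy coassociative. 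The only genuinely delicate step is this last case: one must correctly absorb the discrepancy between $gf$ and $\mathbf{1}$ into the corrected homotopy $H+L+L'$ while keeping careful track of which tensor factor each map acts on. The absence of signs over $\mathbb{Z}_2$ keeps the bookkeeping manageable.
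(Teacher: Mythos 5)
Your proof is correct and follows essentially the same route as the paper: the cocommutativity homotopy $(f\otimes f)hg$ is identical, and your corrected homotopy $f^{\otimes 3}(H+L+L')g$ with $L=(\Delta_{\scriptscriptstyle X}\phi\otimes\mathbf{1})\Delta_{\scriptscriptstyle X}$ and $L'=(\mathbf{1}\otimes\Delta_{\scriptscriptstyle X}\phi)\Delta_{\scriptscriptstyle X}$ is exactly the paper's $f^{\otimes 3}\left[h+(\Delta_{\scriptscriptstyle X}\phi\otimes\mathbf{1}+\mathbf{1}\otimes\Delta_{\scriptscriptstyle X}\phi)\Delta_{\scriptscriptstyle X}\right]g$. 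Your organization of the computation into the auxiliary homotopies $L,L'$ (with the $\psi\otimes\partial$ terms cancelling mod $2$) is a cleaner packaging of the same calculation.
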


\begin{proof}
We suppress subscripts and write $\Delta$ and $\widetilde{\Delta}$.
Assuming $\Delta  $ is homotopy cocommutative, choose a chain homotopy
$h:C_{\ast}\left(  X\right)  \rightarrow C_{\ast}\left(  X\right)  \otimes
C_{\ast}\left(  X\right)  $ such that $\partial^{\otimes}h+h\partial=\tau\Delta  +\Delta.$
Then 
\begin{align*}
\tau\widetilde{\Delta} +\widetilde{\Delta}   & =\tau\left(  f\otimes f\right)
\Delta  g+\left(  f\otimes f\right)
\Delta  g =\left(  f\otimes f\right)  \left(  \tau\Delta  %
+\Delta  \right)  g\\
& =\left(  f\otimes f\right)  \left(  \partial^{\otimes} h+h\partial
\right)  g =\partial^{\otimes} \left(  f\otimes f\right)  hg+\left(  f\otimes f\right)
hg\partial
\end{align*}
and  $\left(  f\otimes f\right)  hg$ is a chain homotopy from $\tau
\widetilde{\Delta} $ to $\widetilde{\Delta} $.  Assuming $\Delta  $ is homotopy coassociative,
choose a chain homotopy $h:C_*(X) \to C_*(X)^{\otimes 3}$ such that 
$\partial^{\otimes}h+h\partial =( \Delta   \otimes \mathbf{1}) \Delta   +( \mathbf{1}\otimes \Delta) \Delta  .$
Then $f^{\otimes 3}\left[  h+(  \Delta  \phi\otimes\mathbf{1}%
+\mathbf{1}\otimes\Delta  \phi)  \Delta  \right]  g$ is a chain
homotopy from $(  \tilde{\Delta} \otimes\mathbf{1})
\tilde{\Delta} $ to $(  \mathbf{1}\otimes\tilde{\Delta} )
\tilde{\Delta} $ as the following calculation demonstrates:\vspace{.1in}

\noindent $(  \tilde{\Delta} \otimes\mathbf{1}+\mathbf{1}%
\otimes\tilde{\Delta} )  \tilde{\Delta}  =\left[  \left(  f\otimes
f\right)  \Delta   g\otimes\mathbf{1}+\mathbf{1}\otimes\left(
f\otimes f\right)  \Delta   g\right]  \left(  f\otimes f\right)  \Delta   g \smallskip$

$=f^{\otimes 3}\left[  \Delta   gf\otimes\mathbf{1}+\mathbf{1}%
\otimes\Delta   gf\right]  \Delta   g\smallskip$

$=f^{\otimes 3}\left[  \Delta  \left(  \mathbf{1}+\partial\phi
+\phi\partial\right)  \otimes\mathbf{1}+\mathbf{1}%
\otimes\Delta  \left(  \mathbf{1}+\partial\phi+\phi\partial\right)
\right]  \Delta   g\smallskip$

$=f^{\otimes 3}\left[  \Delta  \otimes\mathbf{1}+\mathbf{1}%
\otimes\Delta  +\Delta  \left(  \partial\phi+\phi\partial\right)  \otimes
\mathbf{1}+\mathbf{1}\otimes\Delta  \left(  \partial\phi
+\phi\partial\right)  \right]  \Delta   g\smallskip$

$=f^{\otimes 3}[\Delta  \otimes\mathbf{1}+\mathbf{1}%
\otimes\Delta  +\left(  \Delta  \partial\phi\otimes\mathbf{1}+\Delta  
\phi\partial\otimes\mathbf{1}\right)
+\left(  \mathbf{1}\otimes
\Delta  \partial\phi+\mathbf{1}\otimes\Delta  \phi\partial\right)  ]\Delta  
g\smallskip$

$=f^{\otimes 3}[\Delta  \otimes\mathbf{1}+\mathbf{1}%
\otimes\Delta  +\partial^{\otimes}\left(  \Delta  \phi\otimes\mathbf{1}%
\right)  +\Delta  \phi\otimes\partial+\Delta  \phi\partial\otimes
\mathbf{1}+\partial^{\otimes}\left(  \mathbf{1}%
\otimes\Delta  \phi\right) \smallskip$

$\hspace{0.5in}\hspace{0.36in} +\partial\otimes\Delta  \phi+\mathbf{1}%
\otimes\Delta  \phi\partial]\Delta   g\smallskip$

$=f^{\otimes 3}[\left(  \Delta  \otimes\mathbf{1}+\mathbf{1}%
\otimes\Delta  \right)  \Delta  +\partial^{\otimes}\left(  \Delta  \phi
\otimes\mathbf{1}\right)  \Delta +\left(  \Delta  \phi\otimes\mathbf{1}%
\right)  \left(  \partial\otimes\mathbf{1}+\mathbf{1}%
\otimes\partial\right)  \Delta  \smallskip$

$\hspace{0.5in}\hspace{0.36in}+\partial^{\otimes}\left(  \mathbf{1}%
\otimes\Delta  \phi\right)  \Delta  +\left(  \mathbf{1}\otimes\Delta  
\phi\right)  \left(  \partial\otimes\mathbf{1}+\mathbf{1}%
\otimes\partial\right)  \Delta  ]g\smallskip$

$=f^{\otimes 3}[\partial^{\otimes}h+h\partial+\partial^{\otimes}\left(
\Delta  \phi\otimes\mathbf{1}\right)  \Delta  +\left(  \Delta  \phi
\otimes\mathbf{1}\right)  \Delta  \partial +\partial^{\otimes}\left(  \mathbf{1}%
\otimes\Delta  \phi\right)  \Delta \smallskip$

$\hspace{0.5in}\hspace{0.36in} +\left(  \mathbf{1}\otimes\Delta  
\phi\right)  \Delta  \partial]g\smallskip$

$=\partial^{\otimes}[f^{\otimes 3}\left[  h+\left(  \Delta  \phi
\otimes\mathbf{1}+\mathbf{1}\otimes\Delta  \phi\right)  \Delta  
g\right]+[f^{\otimes 3}\left[  h+\left(  \Delta  
\phi\otimes\mathbf{1}+\mathbf{1}\otimes\Delta  \phi\right)
\Delta   g\right]  \partial.$
\end{proof}

\section{Cup Products on 3D Polyhedral Approximations}\label{section_polyhedral}

Traditionally, one uses the standard formulas in \cite{Mun84,Ser51} to compute cup products on a simplicial or cubical complex.  Instead, we apply the diagonal approximation formula given by our next theorem to compute cup products on \emph{any} 
3D polyhedral complex
$X$. This is the main result in this paper.  

\begin{theorem}
\label{main}Let $X$ be a 3D polyhedral 
complex.
Arbitrarily number the vertices of $X$ from $1$ to $n$ and represent a
polygon $p$ of $X$ as an ordered $k$-tuple of vertices
$p=\langle i_{1},\ldots ,i_{k}\rangle$, where $i_1 = \min\{i_1, \dots, i_k\}$,
$i_1$ is adjacent to $i_k$, and $i_j$ is adjacent to $i_{j+1}$ for $1< j < k$. There is a diagonal approximation on $p$ given by%
\begin{align}
\widetilde{\Delta }_{\scriptscriptstyle p}(p)=& \langle i_{1}\rangle \otimes
p+p\otimes \langle i_{m\left( k\right) }\rangle +\sum_{j=2}^{m\left(
k\right) -1}(u_{2}+e_{2}+\cdots +e_{j-1}+\lambda _{j}e_{j})\otimes e_{j}
\notag \\
& +\sum_{j=m\left( k\right) }^{k-1}\left[ (1+\lambda
_{j})e_{j}+e_{j+1}+\cdots +e_{k-1}+u_{k}\right] \otimes e_{j},
\label{formula}
\end{align}
where $i_{m\left( k\right)}:=\max \left\{ i_{2},\ldots ,i_{k}\right\}$,
$\lambda _{j}=0$ if and only if $i_{j}<i_{j+1}$, 
$\left\{ u_{j}=\left\langle i_{1},i_{j}\right\rangle
\right\} _{2\leq j\leq k}$ and $\left\{ e_{j}=\left\langle
i_{j},i_{j+1}\right\rangle \right\} _{2\leq j\leq k-1}.$
\end{theorem}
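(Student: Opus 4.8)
The plan is to verify directly that the right-hand side of (\ref{formula}) satisfies the three defining properties of a diagonal approximation on the polygon $p$ (Definition \ref{diagaproximation}): that it carries $p$ into $p\times p$, that it is a chain map (equivalently, extends $\partial$ to a coderivation), and that it is homotopic to the geometric diagonal $\Delta^G_p$. Property (ii) — the support condition $\widetilde{\Delta}_p(p)\subseteq p\times p$ — is essentially immediate from the form of the formula: every tensor factor appearing in it is either a vertex $\langle i_1\rangle$, $\langle i_{m(k)}\rangle$, an edge $e_j$ or $u_j$ of $p$, or $p$ itself, so each summand lies in a product of faces of $p$. The substantive work is in (iii) and (i).

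For the chain-map property (iii), I would compute $(\partial\otimes\mathbf{1}+\mathbf{1}\otimes\partial)\widetilde{\Delta}_p(p)$ and check it equals $\widetilde{\Delta}_p(\partial p)$. Here $\partial p = e_2+\cdots+e_{k-1}+u_k$ expressed in the $u_j,e_j$ notation (using $u_2=\langle i_1,i_2\rangle$ and telescoping), and on a $1$-cell $e=\langle a,b\rangle$ the induced diagonal is forced to be $\Delta(e)=\langle a\rangle\otimes e + e\otimes\langle b\rangle$ up to the usual $\mathbb{Z}_2$ freedom, while on vertices it is the identity. The verification is a bookkeeping exercise: apply $\partial$ to each tensor factor in the two sums of (\ref{formula}), use $\partial e_j = \langle i_j\rangle + \langle i_{j+1}\rangle$ and $\partial u_j = \langle i_1\rangle + \langle i_j\rangle$, and watch the telescoping cancellations (mod $2$) collapse everything to the expected boundary terms. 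The coefficients $\lambda_j$ and the index $m(k)$ are precisely what is needed to make the two regimes $j<m(k)$ and $j\ge m(k)$ patch together consistently at the vertex $i_{m(k)}$; tracking that the $\lambda_j e_j$ terms cancel correctly at the seam is where care is required.

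For property (i), homotopy to $\Delta^G_p$, the clean approach is to exhibit $\widetilde{\Delta}_p$ as the diagonal induced (in the sense of Definition \ref{coproduct} and Proposition \ref{merge-diagonal}) on $p$ from a standard diagonal on a subdivided model of $p$ via an explicit chain contraction. Concretely, triangulate $p$ as a fan from $i_1$ (so the triangles are $\langle i_1,i_j,i_{j+1}\rangle$), or alternatively realize $p$ as a quotient/merge of quadrangles, equip the subdivided complex with the Alexander–Whitney diagonal $\Delta_s$ (which is homotopic to $\Delta^G$ by Example \ref{AW}), build the obvious chain contraction $(f,g,\phi)$ collapsing the subdivision onto $p$, and compute $(f\otimes f)\circ\Delta_s\circ g$ on the top cell. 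By Proposition \ref{merge-diagonal} the result is automatically homotopic to $\Delta^G_p$ and has the right support, so it only remains to check that this induced formula agrees on the nose with (\ref{formula}); that reduces to the same kind of telescoping computation as above, now carried out inside the contraction. The main obstacle is the combinatorial case analysis driven by the arbitrary vertex numbering: because $i_1$ is merely the minimum and $i_{m(k)}$ the maximum of the remaining labels, the relative order of the intermediate vertices is unconstrained, so the signs-over-$\mathbb{Z}_2$ encoded by the $\lambda_j$ must be shown to make the formula independent of that order while still producing a bona fide chain map. I expect that verifying the chain-map identity uniformly across all orderings — rather than the homotopy, which comes almost for free from the induced-diagonal machinery — will be the crux, and I would organize it by splitting at $m(k)$ and inducting on $k$, reducing the $k$-gon case to the $(k-1)$-gon obtained by merging two adjacent edges, exactly paralleling Definition \ref{merging1} and Proposition \ref{merge-diagonal}.
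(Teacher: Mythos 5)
Your proposal, after considering and setting aside the direct chain-map verification, settles on exactly the paper's strategy: triangulate $p$ as a fan $\{t_{j-1}=\langle i_1,i_j,i_{j+1}\rangle\}$ from $i_1$, put the Alexander--Whitney diagonal on the triangles, and obtain $\widetilde{\Delta}_p$ as the coproduct induced through the explicit chain contractions of Definition \ref{merging1} and Proposition \ref{merge-diagonal} by merging one triangle at a time, with the case split governed by whether the newly attached vertex exceeds the current maximum $i_{m(j)}$. This is essentially the same proof as the paper's, including your correct identification that the homotopy to $\Delta^G_p$ comes for free from Proposition \ref{merge-diagonal} and that the real work is the inductive telescoping computation verifying Formula (\ref{formula}).
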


\begin{proof}
Consider the
triangulation 
$\left\{ t_{j-1}=\right. $ $\left. \left\langle
i_{1},i_{j},i_{j+1}\right\rangle \right\} _{2\leq j\leq k-1}$ of $p$ and note that
the A-W diagonal on $t_{j-1}$ is given by%
\begin{eqnarray}
\Delta _{s}\left( t_{j-1}\right) &=&\lambda _{j}\left( \left\langle
i_{1}\right\rangle \otimes t_{j-1}+t_{j-1}\otimes \left\langle
i_{j}\right\rangle +u_{j+1}\otimes e_{j}\right)  \notag \\
&&\hspace*{0.3in}+\left( 1+\lambda _{j}\right) \left( \left\langle
i_{1}\right\rangle \otimes t_{j-1}+t_{j-1}\otimes \left\langle
i_{j+1}\right\rangle +u_{j}\otimes e_{j}\right) .
\label{A-W}
\end{eqnarray}
Our strategy is to merge these triangles inductively until $p$ is recovered
and the induced coproduct is obtained. We proceed by induction on $j.$

When $j=2$, set $p_{1}=t_{1}$
and note that either $i_{2}<i_{3},$ in which case $%
i_{m\left( 3\right) }=i_{3}$ and $\lambda _{2}=0,$ or $i_{2}>i_{3},$ in
which case $i_{m\left( 3\right) }=i_{2}$ and $\lambda _{2}=1.$ If $%
i_{m\left( 3\right) }=i_{3},$ Formula (\ref{formula}) gives the
non-primitive terms $\left[ u_{2}+\lambda _{2}e_{2}\right] \otimes e_{2}.$
Since $\lambda _{2}=0$ this expression reduces to $u_{2}\otimes
e_{2}=\lambda _{2}u_{3}\otimes e_{2}+\left( 1+\lambda _{2}\right)
u_{2}\otimes e_{2}.$ On the other hand, if $i_{m\left( 3\right) }=i_{2},$ Formula (\ref{formula}%
) gives the non-primitive terms $\left[ \left( 1+\lambda _{2}\right)
e_{2}+u_{3}\right] \otimes e_{2}.$ Since $\lambda _{2}=1$ this expression
reduces to $u_{3}\otimes e_{2}=\lambda _{2}u_{3}\otimes e_{2}+\left(
1+\lambda _{2}\right) u_{2}\otimes e_{2}.$ In either case, Formula (\ref%
{formula}) agrees with the A-W diagonal on $p_{1}$.

Now assume that for some $j\geq 3,$ Formula (\ref{formula}) holds on $%
p_{j-2}=\left\langle i_{1},\ldots ,i_{j}\right\rangle .$ 
Merge $p_{j-2}$ and
$t_{j-1}$ along $u_{j}$ and obtain $p_{j-1}=\left\langle i_{1},\ldots
,i_{j+1}\right\rangle ;$ we claim that Formula (\ref{formula}) also holds on
$p_{j-1}$. In the notation of Definition \ref{merging1}, set 
$\gamma=u_j$, $\mu= p_{j-2}$, $\mu' = t_{j-1}$, and $\mu''=p_{j-1}$.
Then
\[
\widetilde{\Delta }_{\scriptscriptstyle p_{j-1}}\left( p_{j-1}\right) 
 =\left( f\otimes f\right)(\Delta_{s}+\widetilde{\Delta}_{p_{j-2}})g\left( p_{j-1}\right) 
 =\left( f\otimes f\right)\{\Delta_{s}\left(t_{j-1}\right)+\widetilde{\Delta}_{p_{j-2}}\left( p_{j-2}\right) \}, 
\]
where $f$ and $g$ are the chain maps explicitly given in the proof of Proposition \ref{proposition_merged}.
Either $i_{j+1}>i_{m\left( j\right) },$ in which case $\lambda _{j}=0$ and $%
i_{m\left( j+1\right) }=i_{j+1},$ or $i_{m\left( j\right) }>i_{j+1},$ in
which case $i_{m\left( j+1\right) }=i_{m\left( j\right) }.$ First assume that $%
i_{j+1}>i_{m\left( j\right) }.$ Following the proof of Proposition \ref{proposition_merged},
define $f\left( u_{j}\right) =u_{2}+e_{2}+\cdots +e_{j-1},$ $f\left(
p_{j-2}\right) =0,$ and 
$f\left( t_{j-1}\right) =p_{j-1}.$
Then Formulas (\ref{formula}) and (\ref{A-W}) give
\begin{eqnarray*}
\widetilde{\Delta }_{\scriptscriptstyle p_{j-1}}( p_{j-1}) &=&\left( f\otimes f\right) \left\{
\left\langle i_{1}\right\rangle \otimes t_{j-1}+t_{j-1}\otimes \left\langle
i_{j+1}\right\rangle +u_{j}\otimes e_{j}\right.
\\
&&+\left\langle i_{1}\right\rangle \otimes p_{j-2}+p_{j-2}\otimes
\left\langle i_{m\left( j\right) }\right\rangle
 +\sum_{s=2}^{m\left(
j\right) -1}\left( u_{2}+e_{2}+\cdots +\lambda _{s}e_{s}\right) \otimes e_{s}
\\
&&+\sum_{s=m\left( j\right) }^{j-1}\left. \left[ \left( 1+\lambda
_{s}\right) e_{s}+e_{s+1}+\cdots +e_{j-1}+u_{j}\right] \otimes e_{s}\right\}
\\
&=&\left\langle i_{1}\right\rangle \otimes p_{j-1}+p_{j-1}\otimes
\left\langle i_{j+1}\right\rangle  \\
&&+\left( u_{2}+e_{2}+\cdots +e_{j-1}\right)
\otimes e_{j}+\sum_{s=2}^{m\left( j\right) -1}\left( u_{2}+e_{2}+\cdots
+\lambda _{s}e_{s}\right) \otimes e_{s} \\
&&+\sum_{s=m\left( j\right) }^{j-1}\left[ \left( 1+\lambda _{s}\right)
e_{s}+e_{s+1}+\cdots +e_{j-1}+\left( u_{2}+e_{2}+\cdots +e_{j-1}\right) %
\right] \otimes e_{s} \\
&=&\left\langle i_{1}\right\rangle \otimes p_{j-1}+p_{j-1}\otimes
\left\langle i_{j+1}\right\rangle +\sum_{s=2}^{j}\left( u_{2}+e_{2}+\cdots
+\lambda _{s}e_{s}\right) \otimes e_{s},
\end{eqnarray*}
which verifies Formula (\ref{formula}) in this case. On the other hand, if $%
i_{m\left( j\right) }>i_{j+1}\ $define $f\left( u_{j}\right) =e_{j}+u_{j+1},
$ $f\left( p_{j-2}\right) =p_{j-1},$ and 
$f\left( t_{j-1}\right) =0.$ Then
\begin{eqnarray*}
\widetilde{\Delta }_{\scriptscriptstyle p_{j-1}}\left( p_{j-1}\right) &=&\left( f\otimes f\right) \left\{ \lambda _{j}\left( \left\langle
i_{1}\right\rangle \otimes t_{j-1}+t_{j-1}\otimes \left\langle
i_{j}\right\rangle +u_{j+1}\otimes e_{j}\right) \right. \\ \\
&&+\left( 1+\lambda _{j}\right) \left( \left\langle
i_{1}\right\rangle \otimes t_{j-1}+t_{j-1}\otimes \left\langle
i_{j+1}\right\rangle +u_{j}\otimes e_{j}\right)  \\
&&+\left\langle i_{1}\right\rangle \otimes p_{j-2}+p_{j-2}\otimes
\left\langle i_{m\left( j\right) }\right\rangle
+\sum_{s=2}^{m\left(
j\right) -1}\left( u_{2}+e_{2}+\cdots +\lambda _{s}e_{s}\right) \otimes e_{s}
\\
&&+\sum_{s=m\left( j\right) }^{j-1}\left. \left[ \left( 1+\lambda
_{s}\right) e_{s}+e_{s+1}+\cdots +e_{j-1}+u_{j}\right] \otimes e_{s}\right\}
\\
&=&\left\langle i_{1}\right\rangle \otimes p_{j-1}+p_{j-1}\otimes
\left\langle i_{m\left( j\right) }\right\rangle
+\lambda _{j}u_{j+1}\otimes
e_{j}\\
&&+\left( 1+\lambda _{j}\right) \left( e_{j}+u_{j+1}\right) \otimes e_{j}
+\sum_{s=2}^{m\left( j\right) -1}\left( u_{2}+e_{2}+\cdots +\lambda
_{s}e_{s}\right) \otimes e_{s} \\
&&+\sum_{s=m\left( j\right) }^{j-1}\left[ \left( 1+\lambda
_{s}\right) e_{s}+e_{s+1}+\cdots +e_{j-1}+\left( e_{j}+u_{j+1}\right) \right]
\otimes e_{s} \\
&=&\left\langle i_{1}\right\rangle \otimes p_{j-1}+p_{j-1}\otimes
\left\langle i_{m\left( j\right) }\right\rangle +\sum_{s=2}^{m\left(
j\right) -1}\left( u_{2}+e_{2}+\cdots +\lambda _{s}e_{s}\right) \otimes e_{s}
\\
&&+\sum_{s=m\left( j\right) }^{j}\left[ \left( 1+\lambda _{s}\right)
e_{s}+e_{s+1}+\cdots +e_{j}+u_{j+1}\right] \otimes e_{s},
\end{eqnarray*}
which verifies Formula (\ref{formula}) in this case as well and completes
the proof.
\end{proof}

\section{Computing the $\mathbb{Z}_2$-cohomology Algebra of Polyhedral Approximations of 3D Digital Images}\label{nueva}

Given a 3D digital image $I=(\mathbb{Z}^3,26,6,B)$, we apply the simplification procedure presented below to obtain a polyhedral 
complex
$P(I)$. Next, we apply Theorem \ref{main} and adapt the algorithm for computing cup products given in \cite{GR03,GR05} to compute cup products in $H^*(P(I))$.

\subsection{3D Digital Images and Polyhedral 
Complexes}
\label{una}

Consider a 3D digital image $I=(\mathbb{Z}^3,26,6,B)$, where
$\mathbb{Z}^3$ is the underlying grid,
the {\em foreground} $B$ is a finite set of points in the grid,
and the {\em background} is $\mathbb{Z}^3\smallsetminus B$.
 We  fix the $26$-adjacency relation for the points of $B$ and the $6$-adjacency relation for the points of $\mathbb{Z}^3\smallsetminus B$. More concretely, two points $(x,y,z)$ and $(x',y',z')$ of $\mathbb{Z}^3$ are $26$-adjacent
if   $1\leq (x-x')^2+(y-y')^2+(z-z')^2\leq 3$; they are
$6$-adjacent if $(x-x')^2+(y-y')^2+(z-z')^2=1$.
The set of  unit cubes with faces parallel to the coordinate planes centered at the
points of $B$  (called the voxels of $I$) is the {\em continuous analog} of $I$ and is denoted by
$CI$.
The cubical complex $Q(I)$ associated to a 3D digital image $I$ is the set of
voxels (cubes) of $CI$ together with all of their faces (quadrangles, edges and vertices).
Observe that $(26, 6)$-adjacency implies that the topology of $CI$ reflects the topology of $I$
(i.e., the fundamental groups of $CI$ are naturally isomorphic to the digital fundamental
groups of the digital picture $I$ \cite{kong}).

The subcomplex  $\partial Q(I)$  consists of all cells of $Q(I)$ that are facets of exactly one (maximal) cell of $Q(I)$, and their faces.
Note that the maximal cells of $\partial Q(I)$ are all the quadrangles of $Q(I)$ shared by a voxel of $B$ and a voxel of $\mathbb{Z}^3\smallsetminus B$ (see Figure \ref{partialq}).

\begin{figure}[t!]
	\centering
		\includegraphics[width=10cm]{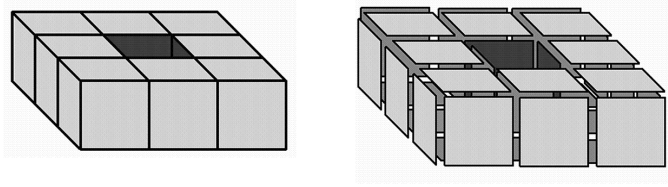}
	\caption{Left: A digital image $I=(\mathbb{Z}^3,26,6,B)$ 
(see Subsection \ref{una});
 the set 
$CI$
consists of $8$ unit cubes (voxels). Right: The quadrangles of $\partial Q(I)$.}
\label{partialq}
\end{figure}

\begin{figure}[t!]
	\centering
		\includegraphics[width=10cm]{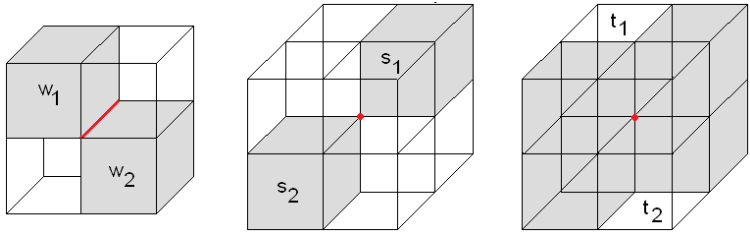}
	\caption{Critical configurations (i), (ii) and (iii) (modulo reflections and rotations).}
	\label{critical}
\end{figure}

We perform a simplification process in $\partial Q(I)$ to produce a 
3D polyhedral 
complex
 $P(I)$ homeomorphic to $\partial Q(I)$ whose maximal cells are polygons. But first, we need a definition.
\begin{definition}\label{defcritical}
A vertex $v\in\partial Q(I)$ is \textbf{critical} if any of the following conditions is satisfied:
\begin{enumerate}
\item[\textit{i.}] $v$ is a vertex of some edge $e$ shared by four cubes, exactly two of which lie in $Q(I)$ and intersect along $e$ (see cubes $w_1$ and $w_2$ in Figure \ref{critical}).\medskip
\item[\textit{ii.}] $v$ is shared by eight cubes, exactly two of which are corner-adjacent and contained in $Q(I)$ (see cubes $s_1$ and $s_2$ in Figure \ref{critical}).\medskip
\item[\textit{iii.}] $v$ is shared by eight cubes, exactly two of which are corner-adjacent and not contained in $Q(I)$ (cubes $t_1$ and $t_2$ in Figure \ref{critical}).\medskip
\end{enumerate}
A \textbf{non-critical vertex} of $\partial Q(I)$ lies in a neighborhood of $\partial Q(I)$ homeomorphic to $\mathbb{R}^2$ (see \cite{Lat97}).
\end{definition}

\begin{figure}[t!]
	\centering
		\includegraphics[width=11cm]{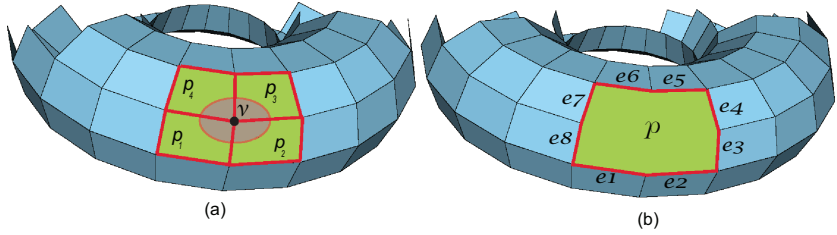}  
	\caption{(a) $N_v \leftarrow \{p_1,p_2,p_3,p_4\}; \;$
 (b) facets of $p$ $\leftarrow \{e_1,e_2,e_3,e_4,e_5,e_6,e_7,e_8\}$. 
See Definition \ref{defcritical}.}
	\label{toro}
\end{figure}

Let $V$ be the set of non-critical vertices in $\partial Q(I)$. Algorithm \ref{algo_disjdecomp} presented below processes the vertices in $V$
to obtain the 
3D polyhedral complex $P(I)$. Initially, $P(I)=\partial Q(I)$.
Given a vertex $v\in V$, let
$N_{v}$ denote the set of cells in $P(I)$ that are {\em incident} to $v$, i.e., $v$ is a vertex of each cell in $N_v$ ($v \in N_v$). 
We say that $v\in V$ is {\em removable} if 
 the number of $2$-cells in $N_v$ is greater than $2$ and, in this case,
 the cells of $N_v$ are replaced with the single $2$-cell $p$, which is the union of the cells in $N_{v}$ (see Figure \ref{toro}). 
Observe that the combinatorial 
criticality condition
in Definition \ref{defcritical} cannot be applied directly on $P(I)$.

Observe that the maximal cells of the resulting 3D polyhedral complex $P(I)$ are polygons and $ P(I)$ has fewer cells than $\partial Q(I)$.
We need some terminating conditions: 
\begin{itemize}
\item \textbf{Terminating Condition 1:} Terminate when 
for each removable non-critical vertex $v\in P(I)$, the number of edges in 
all polygons of $N_v$ is greater than or equal to some specified minimum $m$.\medskip
\item \textbf{Terminating Condition 2: }Terminate when 
for each removable non-critical vertex $v\in P(I)$, all $2$-cells of $N_{v}$ are coplanar.
\end{itemize}
The polygons of a polyhedral complex $P(I)$ produced using Terminating Condition 1 are (not necessarily planar) $k$-gons with $k>m$, whereas the polygons of $P(I)$ produced using Terminating Condition 2 are strictly planar.
Example \ref{polyhedral} demonstrates the differences that can arise from these terminating conditions.

\begin{alg}  
Obtaining the 3D polyhedral complex $ P(I)$.
\begin{tabbing}
{\sc Input:} {\tt  the cubical complex $\partial Q(I)$.
}\\
{\tt Initially, }\= {\tt $ P(I):=\partial Q(I)$;}\\
\>{\tt $V :=$ list of non-critical vertices of  $\partial Q(I)$.}\\
{\tt While} \= {\tt terminating condition is not satisfied}\\ 
\> {\tt For} \= {\tt $v\in V$: }\\
\> \> {\tt If} \= {\tt 
$v$ is removable
}\\  
\>\> \> {\tt remove the cells of $N_v$ from $ P(I)$;}\\
\>\>\> {\tt $2$-cell $p:=$ union of the cells in $N_{v}$;}\\
\> \> \> {\tt add  $p$  to $ P(I)$.}\\
\>\> {\tt End if;}\\
\>\> {\tt Remove $v$ from $V$.}\\
\> {\tt End for}\\
{\tt End while}\\
{\sc Output:}  {\tt The 3D polyhedral complex $ P(I)$.}
 \end{tabbing}
\label{algo_disjdecomp}
\end{alg}

\begin{proposition}
The homologies of $\partial Q(I)$  (the input of Algorithm \ref{algo_disjdecomp}) and $P(I)$ (the output of Algorithm \ref{algo_disjdecomp}) are isomorphic.
\end{proposition}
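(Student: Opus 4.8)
The plan is to show that Algorithm \ref{algo_disjdecomp} changes $P(I)$ only through a finite sequence of cell-merges of the type described in Definition \ref{merging1}, and then invoke Proposition \ref{proposition_merged} together with the fact that an isomorphism of homology is transitive. First I would fix a removable non-critical vertex $v\in V$ at some stage of the \texttt{While}-loop, let $N_v=\{v, e_1,\dots,e_r, p_1,\dots,p_s\}$ where $e_1,\dots,e_r$ are the edges of the current complex incident to $v$ and $p_1,\dots,p_s$ are the $2$-cells incident to $v$ (with $s>2$ by removability), and describe precisely how the single polygon $p=\bigcup_i p_i$ is built from the $p_i$'s. Because $v$ is non-critical, a neighborhood of $v$ in $P(I)$ is homeomorphic to $\mathbb{R}^2$, so the $2$-cells $p_1,\dots,p_s$ are cyclically arranged around $v$ and each consecutive pair $p_i, p_{i+1}$ shares exactly one edge $e_i$ having $v$ as an endpoint; the other endpoints of the $e_i$ are the vertices of the resulting polygon $p$.

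Next I would realize the passage from the old complex to the new one as a composite of $(r)$ elementary merges. Concretely, first merge $p_1$ and $p_2$ along the shared edge $e$ with $v$ as a vertex (this is exactly the hypothesis of Definition \ref{merging1} with $\gamma$ an edge lying in the boundary of exactly two $2$-cells), obtaining a larger polygon; then merge that polygon with $p_3$ along the next edge through $v$, and so on, until all $s$ polygons are merged and all $r$ interior edges through $v$ have been absorbed. After the last merge the vertex $v$ itself is no longer a facet of exactly two cells --- in fact it is no longer incident to any edge of the complex other than possibly edges on the outer boundary of $p$; but since $v$ is a non-critical interior vertex, all edges through $v$ were among the $e_i$, so after the merges $v$ is a free vertex and can be discarded, which is precisely the bookkeeping performed by Algorithm \ref{algo_disjdecomp} when it removes the cells of $N_v$ and adds $p$. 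Each individual merge preserves homology by Proposition \ref{proposition_merged}, hence the total effect of processing $v$ preserves homology. Since the \texttt{While}-loop processes finitely many vertices (each $v$ is removed from $V$ after processing, and $V$ is finite), composing the resulting chain of isomorphisms $H_*(\partial Q(I))\approx\cdots\approx H_*(P(I))$ gives the claim.

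The main obstacle is verifying that each step really is an instance of Definition \ref{merging1}: one must check that at the moment we merge, the edge $\gamma$ we merge along lies in the boundary of \emph{exactly two} $2$-cells of the current complex, and that the two cells being merged are genuinely distinct $2$-polytopes whose union is again a $2$-polytope (or at least a legitimate $2$-cell of a regular cell complex). This is where the non-criticality of $v$ does the real work: criticality conditions (i)--(iii) of Definition \ref{defcritical} are exactly the configurations in which some edge or vertex link around $v$ fails to be a circle, so ruling them out guarantees the local $\mathbb{R}^2$ structure that makes the cyclic merging well-defined and keeps each intermediate $\gamma$ a facet of precisely two $2$-cells. One should also note that the homeomorphism type is preserved (the union of two $2$-cells glued along a common boundary edge is again a disk), so $P(I)$ remains homeomorphic to $\partial Q(I)$, although for the present proposition only the homology isomorphism is asserted and that follows formally from Proposition \ref{proposition_merged}. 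A final minor point to address is independence of the outcome from the order in which vertices of $V$ are visited; for the homology statement this is automatic since every ordering yields a complex with homology isomorphic to that of $\partial Q(I)$.
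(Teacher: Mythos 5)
Your overall strategy --- realizing the processing of each removable non-critical vertex $v$ as a finite sequence of elementary merges in the sense of Definition \ref{merging1}, invoking Proposition \ref{proposition_merged} at each step, and composing the resulting isomorphisms over the finitely many vertices in $V$ --- is exactly the paper's strategy. However, there is a genuine gap at the final step of the local argument around $v$. If $s$ polygons $p_1,\dots,p_s$ are arranged cyclically around $v$ with $s$ interior edges $e_1,\dots,e_s$ through $v$, you can perform only $s-1$ merges: after the penultimate merge the last remaining edge through $v$ lies in the boundary of the \emph{single} merged polygon (on both sides), so it is not ``contained in the boundary of exactly two $2$-cells'' and Definition \ref{merging1} no longer applies. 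Consequently your claim that ``after the merges $v$ is a free vertex'' is false --- one edge $e_v$ from $v$ to a boundary vertex $w$ of $p$ necessarily survives. Moreover, even if $v$ did become an isolated $0$-cell, simply ``discarding'' it would not preserve homology: deleting an isolated vertex changes $b_0$.

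The paper closes this gap with a different elementary move: after the merges, $v$ is a free face of the surviving edge $e_v$ (it is a facet of $e_v$ and of no other cell), and the pair $(e_v,v)$ is removed by collapsing $e_v$ to its other endpoint $w$. This is an elementary collapse, hence a simple-homotopy equivalence, and therefore preserves homology (see \cite[pages 14--15]{simple}). So your proof needs one additional ingredient beyond Proposition \ref{proposition_merged}: a homology-preserving operation (collapse or an explicit chain contraction killing the acyclic pair $(e_v,v)$) to dispose of the last edge and the vertex $v$ itself. With that correction, the rest of your argument --- including your careful point that non-criticality guarantees each intermediate edge is a facet of exactly two distinct $2$-cells, and that the order of processing is immaterial for the homology statement --- matches the paper's proof.
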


\begin{proof}
First, if $v\in P(I)$ is non-critical in $\partial Q(I)$, 
it is non-critical in $P(I)$.
Therefore
each
edge in $N_{v}$ is shared by exactly two $2$-cells. Recursively, take any edge $e$ in $N_v$ and the two $2$-cells
containing it, 
and merge these two cells into a new cell $p$
along $e$. 
This operation preserves homology
by Proposition \ref{proposition_merged}. 
When the process terminates we have 
the $2$-cell $p$
and the vertex $v$, which is an endpoint of some edge $e_v$ in the boundary of $p$ whose other endpoint is $w$. Now collapse
$e_v$ to $w$. Since this collapsing
operation
is a simple-homotopy equivalence, it preserves homology
(see \cite[pages 14--15]{simple}).
\end{proof}

\begin{figure}[t!]
	\centering
		\includegraphics[width=12.5cm]{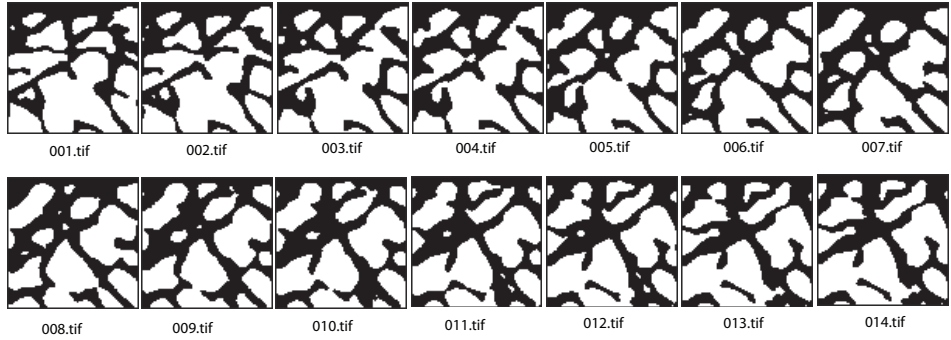}  
	\caption{A micro-CT of a trabecular bone.}
	\label{trabecularbone}
\end{figure}

\begin{figure}[t!]
	\centering
		\includegraphics[width=8cm]{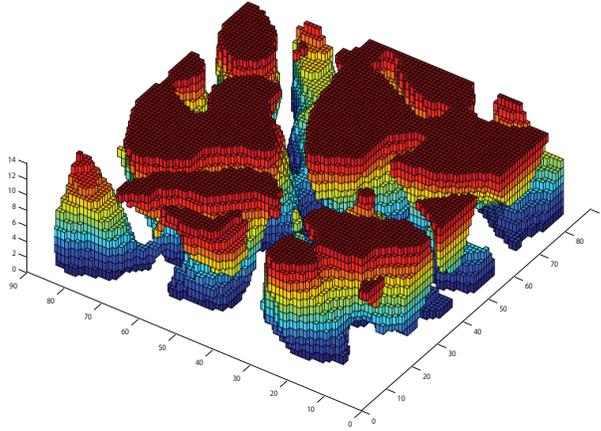}  
	\caption{The cubical complex 
$\partial Q(microCT)$.}
	\label{fig:cubical}
\end{figure}

Note that the size of the output depends on the choice of the vertex $v$.  Thus we conjecture that the procedure is optimized when $v$ is chosen to be the vertex of 
highest degree, i.e., the vertex with highest number of incident edges,
but we do not address this question here.  Nevertheless, our requirement that the number of $2$-cells in $N_v$ be at least $3$ ensures that the resulting complex is polyhedral and its $2$-cells are polygons.

Related algorithms for simplifying polygonal meshes appear in the
literature. We cite a few examples here; for a more extensive but
non-exhaustive list, see \cite{survey}. In their split-and-merge procedure,
F. Schmitt and X. Chen \cite{chen} use their \textquotedblleft merging
stage\textquotedblright\ procedure to join adjacent
\textquotedblleft nearly coplanar\textquotedblright regions. In \cite{lee}, J. Lee
shows how to simplify a triangular mesh by deleting vertices -- the result
is a new triangular mesh. In \cite{kalvin}, A. Kalvin, et al., show how to
reduce the complexity of a polygonal mesh by merging adjacent coplanar
rectangles. In \cite{gourdon}, A. Gourdon shows how to simplify a polyhedron
by sequentially removing edges while preserving the Euler characteristic.
And in \cite{klein}, R. Klein, et al., give a procedure for iteratively
removing vertices from a triangulated manifold, to produce a triangulated
polyhedron (all vertices in a manifold are non-critical).

\begin{figure}[t!]
	\centering
		\includegraphics[width=8cm]{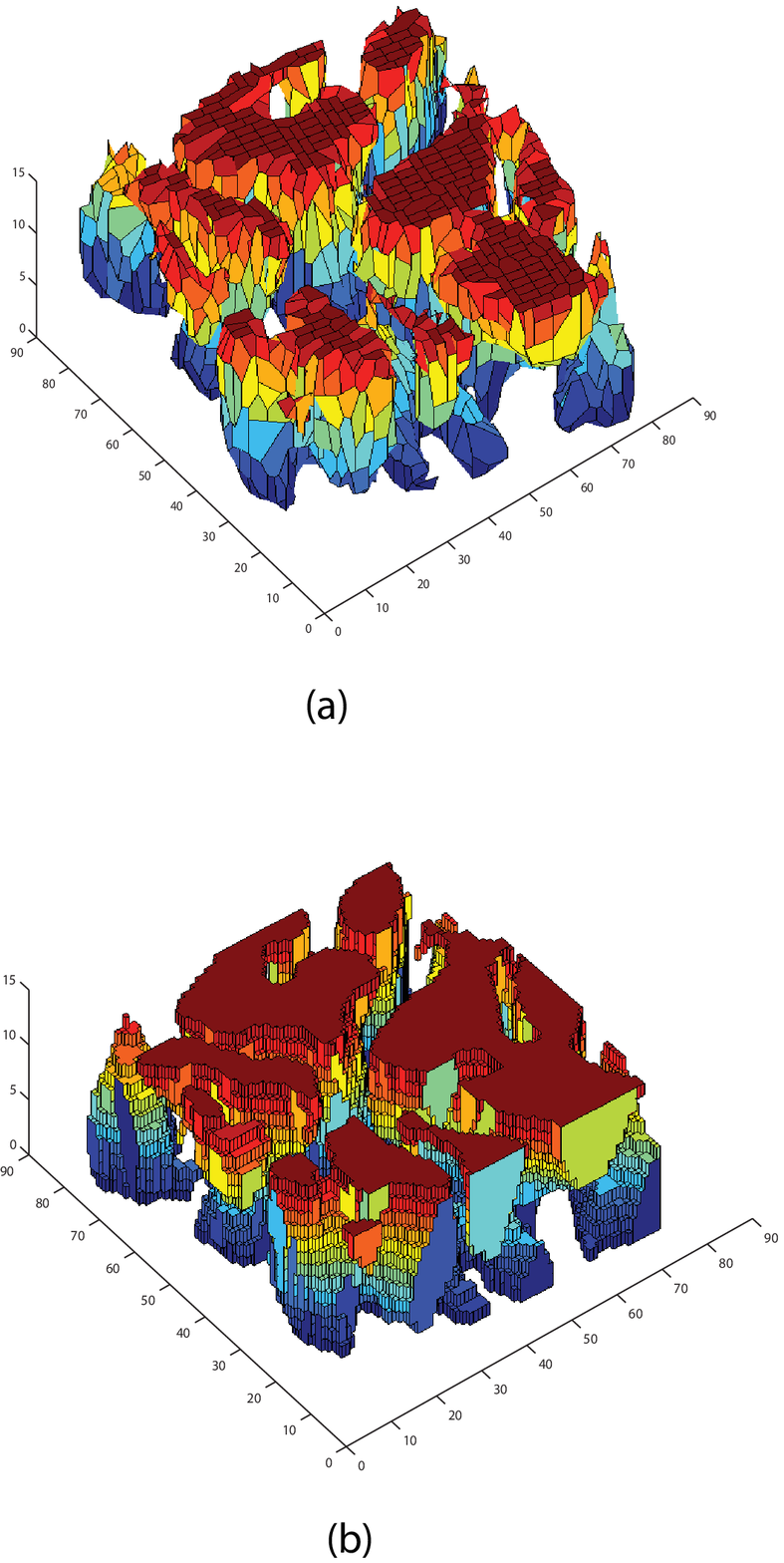}
	\caption{(a) The 
3D polyhedral 
complex 
$P(microCT)$
 for $10$ edges as 
lower
bound on
$p \in  P(microCT)$.
 (b) The 
3D polyhedral 
complex
$P(microCT)$} preserving  geometry.
	\label{reducuni}
\end{figure}

\begin{example}\label{polyhedral}
Figure~\ref{trabecularbone} displays a sequence of 14 2D digital images of size $85\times 85$ produced by a micro-CT of a trabecular bone.  Superposition produces a cubical complex we denote by $Q(microCT)$ with $85\times 85\times 14$ cubes. Its boundary $\partial Q(microCT)$ pictured in Figure \ref{fig:cubical} consists of $24582$ quadrangles.
An application of Algorithm \ref{algo_disjdecomp} to
$\partial Q(microCT)$ 
using Terminating Condition $1$ produces a polyhedral complex 
with $2158$ polygons (see Figure \ref{reducuni}.a).  On the other hand, if we apply Algorithm \ref{algo_disjdecomp} using Terminating Condition $2$, we obtain a polyhedral complex with $12802$ polygons (see Figure \ref{reducuni}.b).
\end{example}

 \begin{figure}[t!]
	\centering
		\includegraphics[width=10cm]{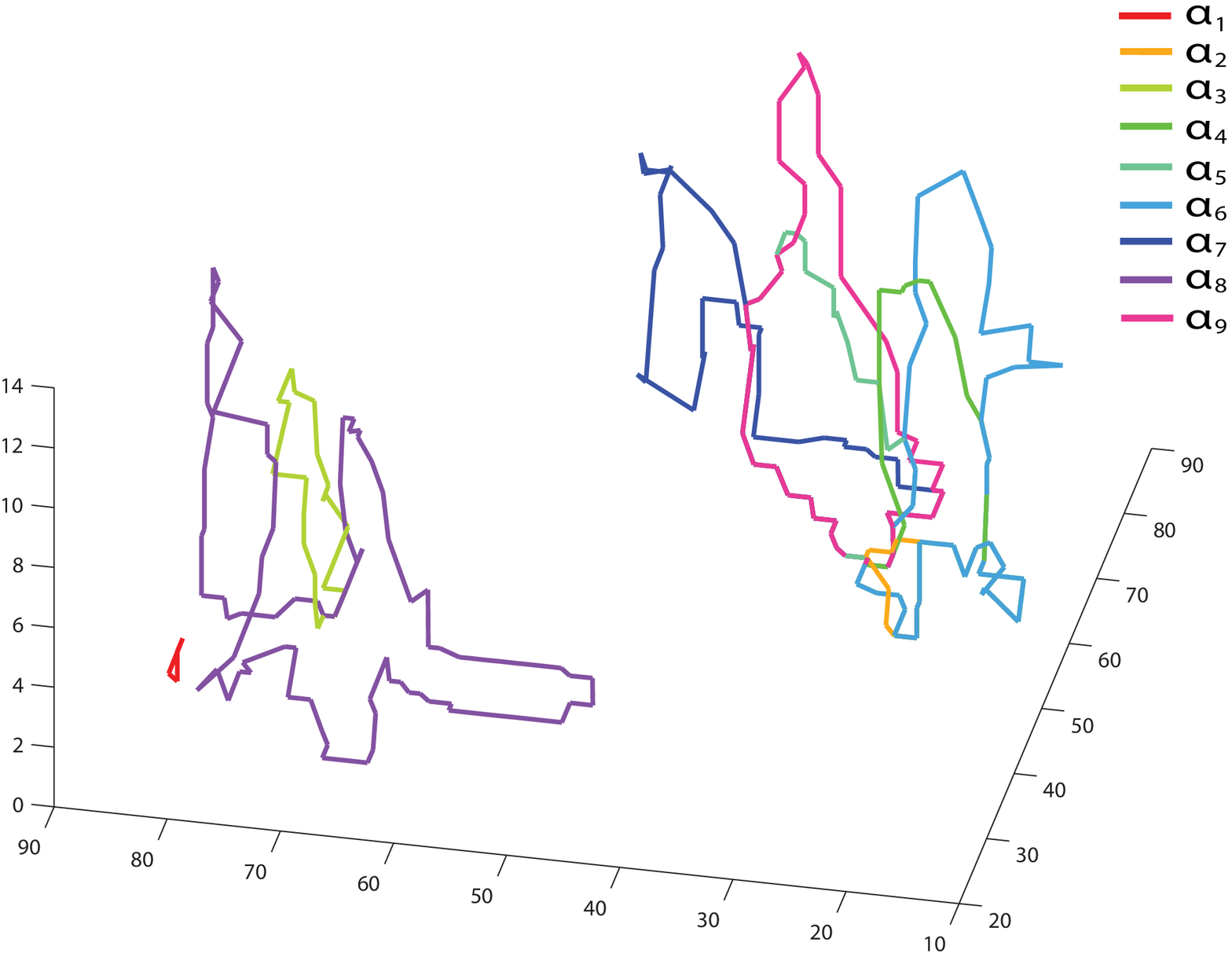}  
	\caption{Representative 
$1$-cycles on the 
3D polyhedral complex $P(microCT)$
displayed in Figure \ref{reducuni}.b.
}
	\label{cycles}
\end{figure}

Now, consider the output $P(I)$ of Algorithm \ref{algo_disjdecomp}. The chain complex\linebreak  $(C_*(P(I)),\partial)$ can be described as follows:
\begin{enumerate}
\item[$\bullet$] The vector space $C_q( P(I))$ is generated by the $q$-cells of $P(I)$.\medskip
\item[$\bullet$] The value of $\partial_q :C_q( P(I))\rightarrow C_{q-1}( P(I))$ on a $q$-cell is the sum of its facets.\medskip
\item[$\bullet$] The boundary of a sum of $q$-cells is the sum of their boundaries.
\end{enumerate}

\begin{example}\label{exatmodel}
Consider the 3D polyhedral complex $P(microCT)$ (Figure \ref{reducuni}.b)  
produced by an application of Algorithm \ref{algo_disjdecomp} to
$\partial Q(microCT)$
(Figure \ref{fig:cubical}).
The 
Betti numbers of $P(microCT)$
obtained by computing an AT-model for $P(microCT)$
are $b_0=8$, $b_1=9$ and $b_2=17$.  These results count the number of connected components, holes and cavities.   Representative $1$-cycles are pictured in Figure \ref{cycles}.
\end{example}

\subsection{Computing the Cohomology Algebra $H^*(P(I))$}\label{dos}

 Given a 3D digital image
$I$,
consider a  3D polyhedral 
complex
 $P(I)$ homeomorphic to $\partial Q(I)$.
All non-trivial cup products in $H^*(P(I))$ are products of distinct $1$-cocycles  by Theorem \ref{wofsey}.
To compute cup products in $H^*(P(I))$, we first compute an AT-model $(f,g,\phi,$ $(P(I),\partial),F)$ for $(C_*(P(I)),\partial)$
using Algorithm 2 given in \cite{GR05}.
If $F_1=\{\mu_0,\dots,\mu_{b_1-1}\}$ and $F_2=\{\gamma_0,\dots,\gamma_{b_2-1}\}$,
the cup products can be stored in a  $ b_1(b_1-1)/2 \times b_2$ matrix $A$. The entry $A(i+j,k)$, $0\leq i< j\leq b_1-1$, $0\leq k\leq b_2-1$, is:
 $$A(i+j,k)=m (\partial_{\mu_i} f\otimes \partial_{\mu_j} f)(\widetilde{\Delta}_{\scst P})g(\gamma_k),$$
where $\widetilde{\Delta}_{\scst P}$ is the induced diagonal approximation given by Theorem \ref{main} 
Thus by 
Equation 
\ref{productH} we have
$$[\partial_{\mu_i} f]\smile [\partial_{\mu_i} f]=\sum_{k=0}^{b_2-1}(A(i+j,k)) \cdot [\gamma^*_k]$$
where $\gamma^*_k$ is the cochain dual to $g(\gamma_k)$. 
The matrix $A$ is symmetric since the cup product is graded commutative ($\widetilde{\Delta}_{\scst P}$ is homotopy cocommutative by Proposition \ref{nablaprima1}).

Let $m$ be the number of voxels of $I$, let $n$ be the number of cells of   $ P(I)$, and let $k$ be the maximum number of vertices in a polygon of $P(I)$.
To determine the computational complexity of the computation of the
cohomology algebra $H^*(P(I))$, note that: 
\begin{enumerate}
\item[$\bullet$] Computing the number of critical vertices of $\partial Q(I)$ is $O(m)$ since the number of cells of $Q(I)$ is at most $27\cdot m$. \medskip

\item[$\bullet$] The computational complexity of Algorithm \ref{algo_disjdecomp}, 
which produces the 
3D polyhedral complex
$P(I)$,
 is $O(m)$ since, in the worst case, the number of edges in $P(I)$ incident to a non-critical vertex $v\in  P(I)$ is $6$ and the number of $2$-cells of $N_v$ is $12$.\medskip
\item[$\bullet$] The computational complexity of the algorithm given in \cite{GR05} to obtain an AT-model $(f,g,\phi,(P(I),\partial),F)$
for $(C_*(P(I)), \partial)$ is $O(n^3)$.  \medskip 
\item[$\bullet$] The complexity to compute a row of $A$ is at most  $O(n\cdot k^2\cdot b_1^2)$, since for a fixed $\gamma\in F_2$,
 $g(\gamma)$ has at most $n$ summands, $\widetilde{\Delta}_{\scst p}$ has $k$ summands, and for a $1$-cell $\sigma\in P(I)$,  $f(\sigma)$ has at most $b_1$ summands.
\end{enumerate}
Thus the overall computational complexity 
for computing the cohomology algebra $H^*(P(I))$
is $O(m+n^3+b_2\cdot n \cdot k \cdot b_1^2)$, where $b_i$ is the $i^{th}$ Betti number.
Furthermore, since    $k<<n<<m$ and $b_{\ell}<<n<<m$, $\ell=1,2$, overall complexity in most cases is $O(n^3)$ and at worst is $O(m^3)$ when no simplification of $\partial Q(I)$ is given.

\begin{figure}[t!]
	\centering
		\includegraphics[width=12cm]{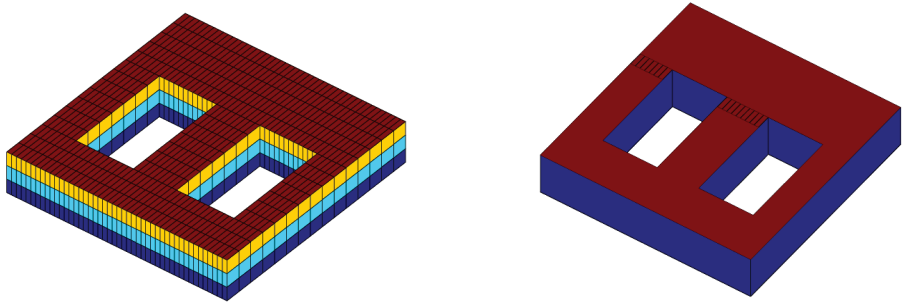} 
	\caption{Left: 
3D polyhedral complex $T$. Right: 
3D polyhedral complex $T'$.}
	\label{dobletoro}
\end{figure}

\begin{example}
Starting from the results obtained in Example \ref{exatmodel}, we applied the formula given in Theorem \ref{main} to
representative $1$-cocycles on the 3D polyhedral complex $P(microCT)$ (Figure \ref{reducuni}.b) and displayed their (non-trivial) cup products in 
the following table ($\alpha_i\alpha_j$ denotes $\alpha_i\smile\alpha_j$):
$$\begin{array}{c|c|c|c|c|c|}  
\qquad \quad \quad & \quad \alpha_{2} \alpha_{4} \quad &\quad  \alpha_{2}  \alpha_{5} \quad &\quad  \alpha_{2} \alpha_{9} \quad &\quad  \alpha_{3}  \alpha_{8}\quad &\quad \alpha_{4} \alpha_{5} \quad  \\
\hline
\beta_{16} & 1 & 1 & 1 & 1 & 1\\
\hline
\end{array}
$$
\end{example}

The table 
below illustrates the dramatic improvement in computational efficiency realized if
cup products are computed
 on the 3D polyhedral complex $T'$ obtained by removing faces and non-critical vertices of the 3D polyhedral complex $T$ of a (hollow) double torus (see Figure~\ref{dobletoro}).

$$\begin{array}{c|c|c|} 
\mbox{ 3D Polyhedral } & \mbox{ Number of } & \mbox{ Time (in seconds) }\\
\mbox{ complex }& \mbox{ $2$-cells  }& \mbox{ to compute the cup product } \\
\hline
T & 1638  & \mbox{ $28.00$ sec. }\\
\hline
T' & 46 & \mbox{ $1.04$ sec. }\\
\hline
\end{array}
$$

\vspace{0.5cm}

The following table displays the cup products on the polyhedral 
complex $T'$  (see Figure \ref{dobletoro}. Right).

$$
\begin{array}{c|c|c|c|c|c|c|}
 \quad&\quad \alpha_1\alpha_2   \quad&\quad  \alpha_1\alpha_3  \quad&\quad   \alpha_1\alpha_4
  \quad&\quad  \alpha_2\alpha_3  \quad&\quad  \alpha_2\alpha_4  \quad&\quad  \alpha_3\alpha_4\quad\\
\hline
\beta \quad & 0 & 1 & 1 & 0 & 1 & 0 \\
\hline
\end{array}
$$

\vspace{0.5cm}

\begin{figure}[t!]
	\centering
		\includegraphics[width=10cm]{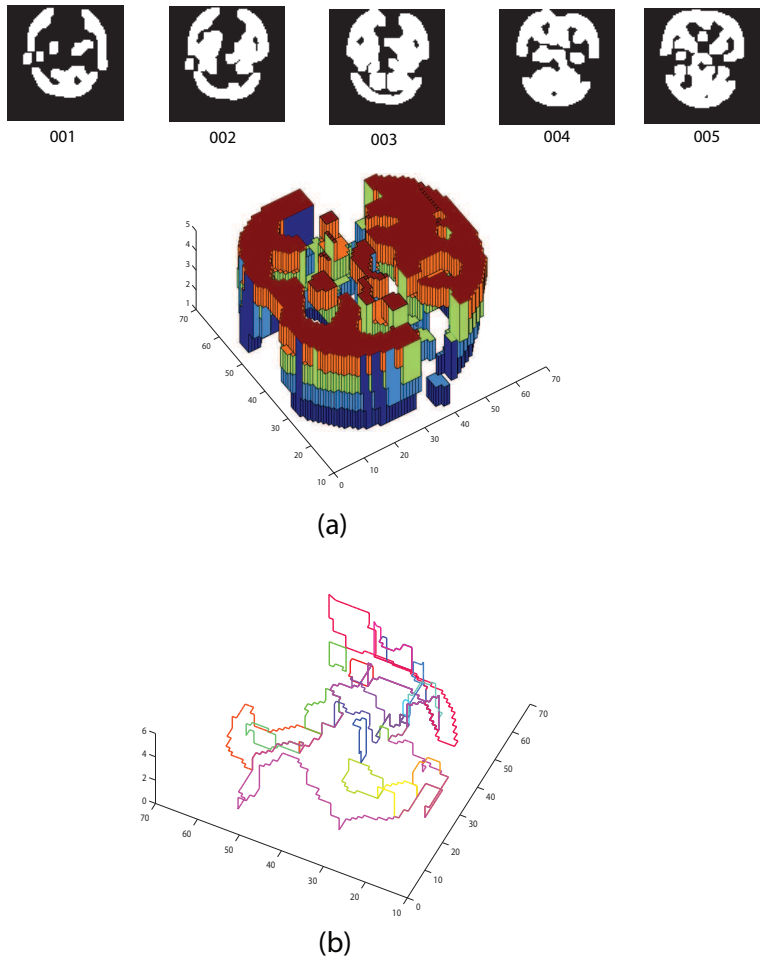}
	\caption{a)
3D polyhedral complex $P(DBrain)$; b) Representative $1$-cycles.}
	\label{figure14}
\end{figure}

\begin{example}
We obtained the 3D digital image $DBrain$ in Figure \ref{figure14}.a by binarizing and resizing the first five frames in J. Mather’s DICOM Example Files containing MR images of the brain\footnote{http://www.mathworks.com/matlabcentral/fileexchange/2762-dicom-example-files/}.
The boundary of $DBrain$ contains $7084$ quadrangles. Algorithm \ref{algo_disjdecomp}
with Terminating Condition 2 produces a 3D polyhedral complex $
P(DBrain)$ with $2433$ polygons (see Figure \ref{figure14}.a). The betti numbers $b_{0}=2$, $b_{1}=19$ and $b_{2}=2$
are determined by computing an AT-model 
$(f,g,\phi ,(P(DBrain),\partial ),F)$
 for $(C_{\ast}(P(DBrain)),\partial )$.
Denote  by $\{\alpha_1,\dots,\alpha_{19}\}$ and $\{\beta_1,\beta_2\}$, 
the respective bases for $H^1(DBrain)$ and $H^2(DBrain)$;
dual representative $1$-cycles
 are pictured in Figure \ref{figure14}.b. The non-trivial cup products displayed in the table below indicate the high level of topological complexity in the polyhedral complex $P(DBrain)$.
\begin{eqnarray*}
&&%
\begin{array}{c|c|c|c|c|c|c|c|c|c|}
& \alpha _{1}\alpha _{8} & \alpha _{2}\alpha _{3} & \alpha _{2}\alpha _{5} & 
\alpha _{2}\alpha _{17} & \alpha _{3}\alpha _{5} & \alpha _{3}\alpha _{6} & 
\alpha _{3}\alpha _{7} & \alpha _{3}\alpha _{9} & \alpha _{3}\alpha _{13} \\ 
\hline
\beta_{2} & 1 & 1 & 1 & 1 & 1 & 1 & 1 & 1 & 1 \\ \hline
\end{array}
\\
&& \\
&&%
\begin{array}{c|c|c|c|c|c|c|c|c|}
& \alpha _{5}\alpha _{6} & \alpha _{5}\alpha _{7} & \alpha _{5}\alpha _{9} & 
\alpha _{5}\alpha _{13} & \alpha _{6}\alpha _{12} & \alpha _{6}\alpha _{13}
& \alpha _{6}\alpha _{16} & \alpha _{8}\alpha _{19}   \\ \hline
\beta_{2} & 1 & 1 & 1 & 1 & 1 & 1 & 1 & 1   \\ \hline
\end{array}
\\
&& \\
&&%
\begin{array}{c|c|c|c|c|c|c|c|}
& \alpha _{9}\alpha _{17} & \alpha _{11}\alpha _{12} & \alpha _{12}\alpha
_{19} & \alpha _{13}\alpha _{15} & \alpha _{14}\alpha _{18} & \alpha
_{16}\alpha _{19} & \alpha _{18}\alpha _{19} \\ \hline
\beta_{2} & 1 & 1 & 1 & 1 & 1 & 1 & 1 \\ \hline
\end{array}%
\end{eqnarray*}
\end{example}

\section{Conclusions and Plans for Future Work}\label{tres}
 Given a 3D digital image $I$, we have formulated the cup product on the cohomology of the 
3D polyhedral complex $P(I)$ obtained by simplifying the cubical complex $\partial Q(I)$.  The algorithm presented here can be applied to any 3D polyhedral complex.
The ultimate goal of this work is to compute cup products on the cohomology of any regular $n$-dimensional cell complex over a general ring directly from its combinatorial structure (without subdivisions).  Our strategy will be to apply some standard topological constructions such as forming quotients, taking Cartesian products, and merging cells.
\vspace{.1in}

\noindent \textbf{Acknowledgments.} We wish to thank Jim Stasheff and the anonymous referees for their helpful suggestions, which significantly improved the exposition, and 
Manuel Eugenio Herrera Lara--Universidad Complutense de Madrid for providing the 14 micro-CT images in Figure \ref{trabecularbone}.

\end{document}